\documentclass[10pt,twocolumn,twoside]{IEEEtran} 
\usepackage{graphicx}
\usepackage{amsmath,amssymb,amsfonts}
\usepackage{nicefrac}
\usepackage{mathptmx}
\usepackage[latin1]{inputenc}
\usepackage{cite}
\usepackage[ruled,lined,linesnumbered]{algorithm2e}

\usepackage{rotating}
\newcommand{\vcell}[1]{\begin{centering}\begin{sideways}\begin{centering}{#1}\end{centering}\end{sideways}\end{centering}}

\def\bc{{\mathbb C}}
\def\bB{{\mathbb B}}
\def\bp{{\mathbf p}}
\def\bq{{\mathbf q}}
\def\br{{\mathbf r}}

\def\ba{{\mathbf a}}
\def\bb{{\mathbf b}}
\def\bx{{\bf x}}

\newcommand{\Proba}{\mathbb{P}}

\newtheorem{definition}{Definition}
\newtheorem{theorem}{Theorem}
\newtheorem{lemma}{Lemma}
\newtheorem{proposition}{Proposition}

\newenvironment{proof}[1][Proof]{\begin{trivlist}
\item[\hskip \labelsep {\bfseries #1}]}{\end{trivlist}}

\graphicspath{{./},{Images/}}

\author{Neus~Sabater$^*$,  Andr\'{e}s~Almansa$^{**}$ and~Jean-Michel~Morel$^*$\\
$^*$ENS Cachan, CNRS-CMLA. France.\\
$^{**}$Telecom ParisTech, CNRS-LTCI. France.
}

\begin{document}

\title{Meaningful Matches in Stereovision}

\maketitle

\begin{abstract}
This paper introduces a statistical method to decide whether two blocks in a pair of  images match reliably.
The method  ensures that the selected block matches are unlikely to have occurred ``just by chance.'' The new approach is based on the definition of a simple but faithful statistical
\textit{background model} for image blocks learned from the image itself. A theorem guarantees that under this model not more than a fixed number of wrong  matches occurs (on average) for the whole image. This fixed number (the number of false alarms) is the only method parameter.  Furthermore, the number of false alarms associated with each match measures its reliability.
This {\it a contrario} block-matching  method, however, cannot rule out false matches due to the presence of periodic objects in the images. But it is successfully  complemented by a parameterless \textit{self-similarity threshold}.
Experimental evidence shows that the proposed method also detects occlusions and incoherent motions due to vehicles and pedestrians in non simultaneous stereo.
\end{abstract}

\begin{IEEEkeywords}
Stereo vision, Block-matching, Number of False Alarms (NFA), \textbf{{\it a contrario}} detection.

\end{IEEEkeywords}

\section{Introduction}

Stereo  algorithms aim at reconstructing a 3D model from two or more images of the same scene acquired at different angles. This work only considers  previously stereo-rectified image pairs. In that case the 3D reconstruction  requires that the matched  points  in both images belong to the same horizontal epipolar line. The matching process of stereo image pairs has been studied in depth for more than four decades.  \cite{Brown03} and \cite{Scharstein02} contain a fairly complete comparison of the main methods. According to these surveys there are roughly two main classes of algorithms in binocular stereovision:  local matching methods and global methods.

Global methods aim at a coherent solution obtained by minimizing an energy functional containing matching fidelity terms and regularity constraints. The most efficient ones seem to be Belief Propagation \cite{Klaus06,Yang06}, Graph Cuts \cite{Kolmogorov05}, Dynamic Programming \cite{Ohta85,Forstmann04} and solvers of the multi-label problem \cite{Ishikawa03, Pock08}. They often resolve ambiguous matches by maintaining a coherence along the epipolar line (DP) or along and across epipolar lines (BP \& GC). They rely on a regularization term to eliminate outliers and reduce the noise. They give a match to all points which are not detected as occluded. Global methods are, however, at risk to make or propagate errors if the regularization term is not adapted to the scene. A classic example is when a large portion of the scene is nearly constant, for example a scene including a cloudless sky, since there is no information in such a region to compute reliable matches (see Fig. \ref{fig:flower_garden} for an example). On such ambiguous regions, global methods perform an interpolation by using the informative pixels. This interpolation can be lucky, as it is the case in most images of the Middlebury benchmark\footnote{http://vision.middlebury.edu/stereo/}. But it can also fail, as is apparent in the above example and in many outdoor scenes.  Furthermore, the energy in global methods, has at least two terms and one parameter weighting them (and sometimes three terms and two parameters \cite{Kolmogorov05}). These parameters are difficult to tune, and even to model. Thus, it remains a valid question how to rule out by a parameterless method the dubious regions where the matches cannot be scientifically demonstrated.

On the other hand local methods are simpler, but equally sensitive to local ambiguities. Local methods start by comparing features of the right and left images. These features can be blocks in block-matching methods, or even local descriptors \cite{Mikolajczyk03} like SIFT descriptors \cite{Lowe04,Rabin07}, curves \cite{Schmid00}, corners \cite{Harris88,Cao04}, etc.  The drawback of local methods is that they do not provide a dense map as global methods do (meaning that the percentage of matched points is lower than 100\%).

Recent years have therefore seen  a blooming of global methods, which reach the best performance in recent benchmarks such as the Middlebury dataset \cite{Scharstein02}. But our purpose is to show that local methods can also be competitive. This paper considers the common denominator of most local methods, block-matching. It shows that this tool is amenable to a local statistical decision rule telling us whether a match is reliable. In fact, not all the pixels in an image pair can be reliably matched in real scenes (40 to 80\% of pixels). The lack of corresponding points in the second image or the ambiguity in certain points stirs up gross errors in dense stereovision. In particular block-matching methods suffer from two mismatching causes that must be tackled one by one:

\begin{enumerate}
\item  The main mismatch cause in local methods is the absence of a theoretically well founded threshold to decide whether two blocks
really match or not. Our main goal here will be to define such a threshold by an {\it a contrario} block-matching (ACBM) rejection rule, ensuring that two blocks do not match ``just by chance.''

\item A second minor mismatch cause is the presence on the epipolar line of repetitive shapes or textures, a problem sometimes called ``stroboscopic phenomenon,'' or ``self-similarity.''  The proposed ACBM only rules out stochastic similarities,  not deterministic ones. While the ACBM rule mismatches repetitive patterns, these types of mismatches are easily eliminated by a simple self-similarity rule (SS). We shall, however, verify that a self-similarity rule by itself is far from reaching the ACBM performance. Both rules are necessary and complementary.
\end{enumerate}

The elimination of these two sorts of mismatches is a key issue in block-matching me\-thods. The problem of sifting out matching errors in stereovision has of course  been addressed many times. We shall discuss a choice of the significant contributions for each cause of mismatch.

{\it Occlusions} are still an open problem in stereovision and one of the main causes of mismatch. For this reason numerous stereo approaches focus on detecting them.  Global energy methods \cite{Kolmogorov05} address occlusions by adding a penalty term for occluded pixels in their energy function. In \cite{Szeliski01} the major contribution is the reasoning about visibility in multi-view stereo. \cite{Yang06} computes two disparity maps symmetrically and verifies the left-right coherence  to detect occluded pixels. \cite{Ohta85} asserts that if two points in the epipolar line match with two points with a different order then there is an occlusion. Again this can lead to errors if there are narrow objects in the scene. See also \cite{Egnal02}, which compares a choice of methods to detect occlusions.

Matching pixels in {\it poorly textured regions}, where noise dominates signal, is clearly the main cause of error. Based on local SNR estimates,  \cite{Delon07} has proposed to reject matches by thresholding the second derivative of the correlation function: the flatter the correlation function, the less reliable the match. In \cite{Sara02}, the mismatches due to weakly textured objects  or  to  {\it periodic structures} are considered. The author defines a confidently stable matching in order to establish the largest possible unambiguous matching at a given confidence level. Two parameters control the compromise between the percentage of bad matches and the match density of the map. Yet, the match density falls dramatically when the percentage of mismatches decreases. We will see that the method presented here is able to get denser disparity maps with less mismatches. Similarly, \cite{Manduchi99} tries to eliminate errors on repeated patterns. Yet their matches seem to concentrate mainly on image edges and therefore have a low density.
A more primitive version of the rejection method developed here was applied successfully to the detection of {\it moving and disappearing objects} in \cite{Sabater10}. This is a foremost problem in the quasi-simultaneous stereo usual in aerial or satellite imaging where vehicles and pedestrians perturb strongly the stereo matching process. The extended method  presented here deals with a much broader  class of mismatches, including those due to poor signal to noise ratio.

\subsection{Anterior Statistical \textit{A Contrario} Decision Methods}

 Because of the above mentioned reasons one cannot presuppose the existence of uniquely determined correspondences for all pixels in the image. Thus, a decision
must be taken on whether a block in the left image actually meaningfully matches or not its best match in the right
image. This problem will be addressed by the \textit{a contrario} approach  initiated by \cite{Desolneux07}. This method is generally viewed as an
adaptation to image analysis of classic hypothesis testing.  But it also has a psychophysical justification in the so-called Helmholtz principle, according to which all perceptions could be characterized as having a low probability of occurring in noise. Early versions of this principle  in computer vision are \cite{Lowe85}, \cite{Grimson91}, \cite{Stewart95}.

A probabilistic {\it a contrario } argument is also invoked in the SIFT method \cite{Lowe04}, which includes an empirical rejection threshold. A match between two descriptors $S_1$ and $S'_1$ is rejected if the second closest match $S'_2$ to $S_1$ is actually almost as close to $S_1$ as  $S'_2$ is. The typical distance ratio rejection threshold is $0.6$, which means that $S_2$ is accepted if $dist(S'_1,S_1)\leq 0.6\times dist(S'_2, S_1)$ and rejected otherwise. Interestingly, Lowe justifies this threshold by a probabilistic argument: if the second best match is almost as good as the first, this only means that both matches are likely to occur casually. Thus, they must be rejected.  Recently, \cite{Rabin07} proposed a rigorous theory for this intuitive method.  SIFT matches are accepted or rejected by an \textit{a contrario} methodology involving the Earth mover distance. The \textit{a contrario} methodology has also already been used in
stereo matching. \cite{Moisan04} proposed a probabilistic criterion to detect a rigid motion between two point sets taken from a stereo pair, and to estimate the fundamental matrix. This method, ORSA, shows improved robustness compared to a classic RANSAC method. In the context of foreground detection in video, \cite{Mittal04} proposed an \textit{a contrario} method for discriminating foreground from background pixels that was later refined by \cite{Patwardhan08}. Even though this problem has some points in common with stereo matching, it is in a way less strict, since it only needs to learn to discriminate two classes of pixels. Hence they do not need to resort to image blocks, but rely only on a 5 dimensional feature vector composed of the color and motion vector of each pixel.


Among influential related works, Robin {\it et al.} \cite{Robin09}  describe a method for change detection in a time series of Earth observation images.
The change region is defined as the complement of the maximal region where the time series does not change significantly. Thus, what is controlled by the {\it  a contrario } method is the number of false alarms (NFA) of the no-change region. This method can therefore  be regarded as an {\it a contrario} region matching method. It is fundamentally different from the method we shall present. Indeed, Robin's method assumes (in addition to the statistical background model) a statistical image model that the time series follows in the regions where no change occurs, which is not feasible in stereo matching.


The method in \cite{Nee08} is also worth mentioning.  It is an \textit{a contrario} method for detecting similar regions between two images. This method is a classic statistical test rather than an \textit{a contrario} detection method in the sense of \cite{Desolneux07}. Indeed, the role of the background model ($H_0$ hypothesis) and the structure to be tested ($H_1$ hypothesis) are reversed: This method only controls the false negative rate and not the false positive rate (as in typical \textit{a contrario} methods). Furthermore the significance level of the statistical test is set to $\alpha \approx 0.1$ in accordance with classical statistical testing, whereas as demonstrated in \cite{Desolneux07} the significance level can be made much more secure, of the order of $10^{-6}$.

 The {\it a contrario} model for region matching in stereo vision used in \cite{Igual07} is simple and efficient. The gradient orientations at all region pixels are assumed  independent and uniformly distributed in the background model. A more elaborate version learns the probability distribution of gradient orientation differences under the hypothesis that the disparity (or motion) is zero, and uses this distribution as a background model.  Still, pixels are all considered as independent under the background model. Once this background model is learned, a given disparity (or motion model) is considered as meaningful if the number of aligned gradient orientations is sufficiently large within the tested region. This region matching method works well, but requires an initial over-segmentation of the gray-level image which is later refined by an \textit{a contrario} region merging procedure. Because of the rough background model, false positive region matches can be observed.

The key to a good background or {\it a contrario} model in block-matching would be to learn
 a realistic probability distribution of the high-dimensional space of  image patches.
The seminal works  \cite{Muse06} and \cite{Cao08} in the context of shape matching (where shapes are represented as pieces of level lines of a fixed size) showed that high-dimensional shape distributions can be efficiently approximated by the tensor product of (well chosen) marginal distributions. The marginal laws are one-dimensional, and therefore easily learned. In \cite{Muse03} these marginals are learned along the orientations of the principle components. The present work can be viewed as an extension of this curve matching method to block-matching.

\cite{Burrus09} proposed an alternative way of choosing detection thresholds such that the number of false detections under a given background model is ensured to stay below a given threshold.
The procedure does not require analytical computations or decomposing the probability as a tensor product of marginal distributions.
Instead, detection thresholds are learned by Monte-Carlo simulations in a way that ensures the target NFA rate.
This method, that was developed in the context of image segmentation, involves the definition of a set of thresholds to determine whether two neighboring regions are similar. However, as in \cite{Nee08}, the detected event whose false positive rate is controlled is \textit{``the two regions are different,''} and not the one we are interested in in the case of region matching, namely \textit{``the two regions are similar.''}

In conclusion, the {\it a contrario} methodology is expanding to many matching decision rules, but does not seem  to have been previously applied to the block-matching problem. We shall now proceed to describe the {\it a contrario} or background model for block-matching. The proposed model is the simplest that worked, but the reader may wonder if a still simpler model could actually work. In the next section we analyze a list of simpler proposals, and we explain why they must be discarded.

\subsection{Choosing an Adequate \textit{A Contrario} Model for Patch Comparison.}
The goal of this section is is to reject simpler alternatives to the probabilistic block
model that will be used.  In recent years, patch models and patch spaces are becoming increasingly  popular. We refer
to \cite{Mairal08} and references therein for algorithms generating sparse bases of patch spaces. Here, our goal can
be formulated in one single question,  that clearly depends on the observed set of patches in one particular image
and not on the probability space of {\it all} patches. The question  is: \newline``{\it What is the probability that
given two images and two similar patches in these images, this similarity arises just by chance?}''\newline  The ``just
by chance'' implies the existence of a stochastic {\it background model}, often called  the {\it a contrario} model.

When trying to define a well suited model for image blocks, many possibilities open up. Simple arguments show, however,
that over-simplified models do not work. Let $H$ be the gray-level histogram of the second image $I'$. The simplest
{\it a contrario model} of all might simply assume that the observed values $I'(\bx)$ are instances of i.i.d. random
variables $\mathcal{I}'(\bx)$ with cumulative distribution $H$. This would lead us to affirm that pixels $\bq$ in image $I$
and $\bq'$ in image $I'$ are a meaningful match if their gray level difference is unlikely small,
    $$ \Proba [ |I(\bq) - \mathcal{I}'(\bq')| \leq |I(\bq) - I'(\bq')| := \theta ] \leq \frac{1}{N_{tests}}. $$
As we shall see later, the number of tests $N_{tests}$ is quite large in this case ($N_{tests} \approx 10^7$ for typical
 image sizes), since it must consider all possible pairs of pixels $(\bq,\bq')$ that may match. But such
 a small probability can be achieved (assume that $H$ is uniform over $[0,255]$) only if the threshold
 $\theta = |I(\bq) - I'(\bq')| < 128 \cdot 10^{-7}$.
On the other hand, $|I(\bq) - I'(\bq')|$ cannot be expected to be very small because both images are corrupted
 by noise, among other distortions. Even in a very optimistic setting, where there would be only a small noise distortion
 between both images (of about 1 gray level standard deviation), such a small difference would only happen for
  about a tiny proportion ($3.2*10^{-5}$) of the correct matches.

This means that a pixel-wise comparison would require an extremely strict detection threshold to ensure the absence of
false matches, but this leads to an extremely sparse detection (about thirty meaningful matches per mega-pixel image).
This suggests that the use of local information around the pixel is unavoidable.

The next simplest approach could be to compare blocks of a certain size $\sqrt{s} \times \sqrt{s}$ with the $\ell^2$ norm, and  with the same background model as
before. Thus, we could declare blocks $B_{\bq}$ and $B_{\bq'}$ as meaningfully similar if
       \begin{multline}
 \Proba \left[ \frac{1}{|B_0|} \sum_{\bx\in B_0} |I(\bq+\bx) - \mathcal{I}'(\bq'+\bx)|^2 \leq \right. \\
\left. \frac{1}{|B_0|} \sum_{\bx\in B_0} |I(\bq+\bx) - I'(\bq'+\bx)|^2 := \theta \right]  \leq \frac{1}{N_{tests}}
       \end{multline}

where $B_0$ is the block of size $\sqrt{s} \times \sqrt{s}$ centered at the position (0,0).
Now the test would  be passed for a more reasonable threshold ($\theta = 6, 28, 47$ for blocks of size $3 \times 3$,
$5\times 5$, $7\times 7$ respectively), which would ensure a much denser response. However, this {\it a contrario
} model is by far too naive and produces  many false matches. Indeed, blocks stemming from natural images are much more
regular than the white noise generated by the background model. Considering all pixels in a block as independent leads
to overestimating the similarity probability of two observed similar blocks. It therefore leads to an over-detection.

In order to fix this problem, we need a background model better reflecting the statistics of natural image blocks.
But directly learning such a probability distribution from a single image in dimension 81 (for $9\times 9$ blocks)  is
hopeless.

Fortunately, as pointed out in \cite{Muse06},  high-dimensional distributions of shapes can be approximated by the
tensor product of their adequately chosen marginal distributions. Such marginal laws, being one-dimensional, are
easily  learned from a single image. Ideally,  ICA (Independent Component Analysis) should be used to learn which marginal laws are the most
independent, but the simpler PCA analysis will show accurate enough for our purposes. Indeed,  it ensures that the
principal components are decorrelated, a first approximation to independence. Fig. \ref{patches} gives a visual
assessment of how well a local PCA model simulates image patches in a class. Nevertheless, the independence  assumption will be used as  a tool for building the a-contrario model. This independence is not an empirical finding on the set of patches.

\subsection{Plan}
Section \ref{NeighborhoodComparison}  introduces the  stochastic block model learned from a reference image. Section \ref{TheAContrarioModel}  presents the {\it a contrario} method applied to disparity estimation in stereo pairs and treats the main problem of deciding whether two pixels match. Theorem \ref{Laseuleproposition} is the main result of this section, ensuring a controlled number of false detections. Section \ref{sec:autosimilarity-threshold} tackles the stroboscopic   problem  by a parameterless method, and demonstrates the necessity and complementarity of the {\it a contrario} and self-similarity  rejections.
Experimental results and comparison with other methods are in Section
\ref{sec:experimental_results}.
  Section \ref{Conclusions} is  conclusive. An appendix  summarizes the algorithm and gives its complete pseudo-code.

\section{The {\it a contrario} Model for Block-Matching}\label{NeighborhoodComparison}

We shall denote by $\bq\!\!=\!\!(q_1,q_2)$ a pixel in the reference image $I$ and by $B_\bq$ a block centered at $\bq$. To fix ideas, the block will be a square throughout this paper, but this is by no means a restriction. A different shape (rectangle, disk) would be possible, and even a variable shape. Given a point $\bq$ and its block $B_{\bq}$ in the reference image, block-matching algorithms
look for a point $\bq'$ in the second image $I'$ whose block $B_{\bq'}$ is similar to $B_{\bq}$.

\subsection{Principal Component Analysis} \label{PCA}

\begin{figure}
\begin{minipage}{.45\columnwidth}
\centering
 \includegraphics[width=3.5cm]{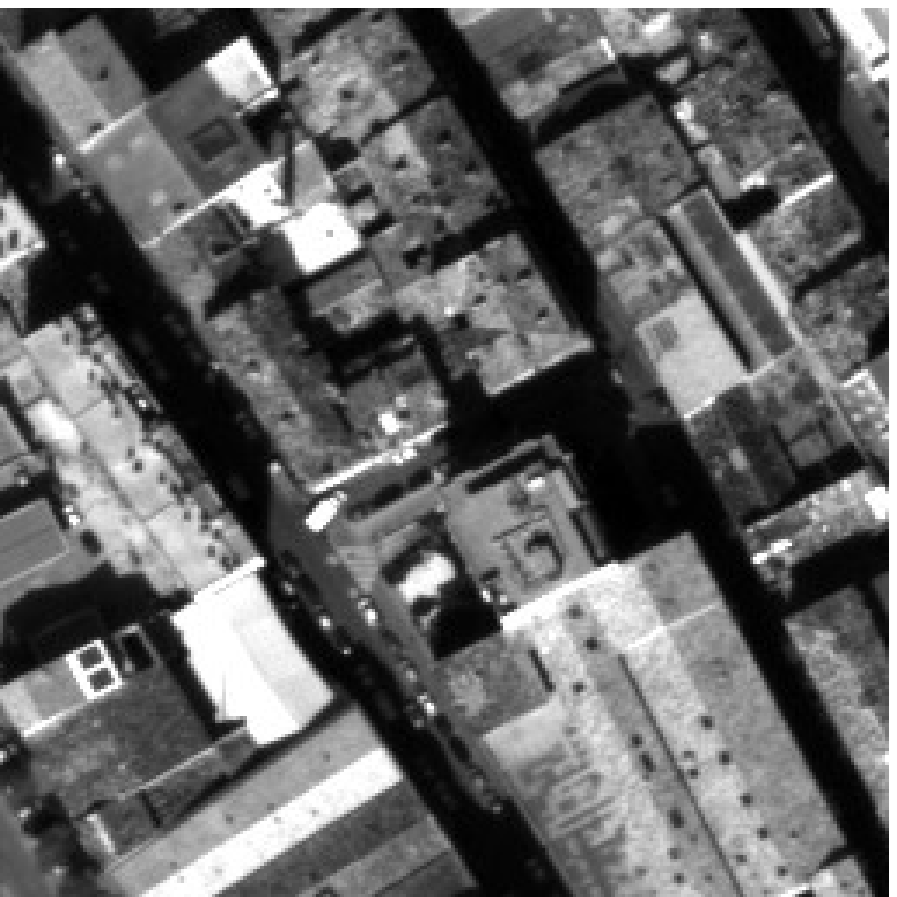}
\end{minipage}
\begin{minipage}{.45\columnwidth}
\centering
\includegraphics[width=1cm]{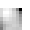}
\includegraphics[width=1cm]{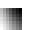}
\includegraphics[width=1cm]{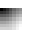}\\
\vspace{.2cm}
\includegraphics[width=1cm]{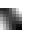}
\includegraphics[width=1cm]{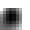}
\includegraphics[width=1cm]{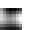}\\
\vspace{.2cm}
\includegraphics[width=1cm]{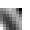}
\includegraphics[width=1cm]{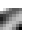}
\includegraphics[width=1cm]{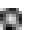}\\
\end{minipage}
\hfil
\caption{Left: Reference image of a stereo pair of images. Right: the nine first principal components of the  7$\times$7 blocks.  }
\label{pcs_partition}
\end{figure}

For building a simple {\it a contrario} model the principal component analysis can play a crucial role, as shown in \cite{Muse03}. Indeed, it allows for effective dimension reduction and decorrelates these dimensions, giving a first approximation to independence. This facilitates the construction of a probabilistic density function for the blocks as a tensor product of its marginal densities.
Let $B_{\bq}$ be the block of a pixel $\bq$ in the reference image and $(x_{1}^{\bq}, \ldots ,x_{s}^{\bq})$ the intensity gray levels in
$B_{\bq}$, where $s$ is the number of pixels in $B_{\bq}$. Let $n$ be the number of pixels in the image. Consider the
matrix $X=(x_{i}^{j})$ $1 \leq i \leq s ,\; 1 \leq j \leq n$ consisting of the set of all data vectors, one column per
pixel in the image. Then, the covariance matrix of the block is $C= \mathbb{E} (X-\bar{\bx}\textbf{1})(X-\bar{\bx}\textbf{1})^{T}$, where $\bar{x}$ is the column vector of size $s \times 1$ storing the mean values of matrix $X$ and $\textbf{1}=(1, \cdots, 1)$ a row vector of size $1 \times n$. Notice that $\bar{\bx}$ corresponds to the block whose $k$-th pixel is the average of all $k$-th  pixels of all blocks in the image. Thus, $\bar{\bx}$ is very close to a constant block, with the constant equal to the image average.
The eigenvectors of the covariance matrix are called principal components and are ortho\-gonal. They give the new coordinate system we shall use for blocks. Fig. \ref{pcs_partition} shows the first principal blocks.

Usually, the eigenvectors are sorted in order of decreasing eigenvalue. In that way the first principal components are
the ones that contribute most to the variance of the data set. By keeping the first $N<s$ components with larger
eigenvalues, the dimension is reduced but the significant information retained. While this global ordering could be
used to select the main components, a local ordering for each block will instead be used for the statistical matching rule. In other words, for each block, a new order for the principal components will be established given by the corresponding ordered PCA coordinates (the decreasing order is for the absolute values). In that way, comparisons of these components will be made from the most meaningful to the least meaningful one for this particular block.

Each block is represented by $N$ ordered coefficients $(c_{\sigma_{\bq}(1)}(\bq),\ldots,c_{\sigma_{\bq}(N)}(\bq))$, where $c_{i}(\bq)$ is the resulting coefficient after projecting $B_{\bq}$ onto the principal component $i \in \{ 1,\ldots,s \}$ and $\sigma_{\bq}$ the permutation representing the final order when ordering the absolute values of components for this particular $\bq$ in
decreasing order. By a slight abuse of notation we will write $c_{i}(\bq)$ instead of $c_{\sigma_{\bq}(i)}(\bq)$ knowing that it represents the local order of the best principal components. But notice that  $\sigma_{\bq}(1)=1$ for most $\bq$ because of the dominance of the first principal component. Moreover notice that this first component has a quite different coefficient histogram than the other ones (see Fig. \ref{pca_coordinate_histograms}), because it approximately computes a mean value of the block. Indeed, the barycenter of all blocks is roughly a constant block whose average grey value is  the
image average grey level. The set of blocks is elongated in the direction of the average grey level and, therefore, the
first component computes roughly an average grey level of the block. This explains why the  first component histogram
is similar to the image histogram.

\begin{figure}
\begin{center}
\begin{minipage}{.45\columnwidth}
\centering
\includegraphics[scale=2.5, angle=0,clip]{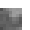}
\includegraphics[scale=2.5, angle=0,clip]{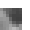}
\includegraphics[scale=2.5, angle=0,clip]{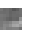}
\includegraphics[scale=2.5, angle=0,clip]{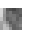}\\
\vspace{.1cm}
\includegraphics[scale=2.5, angle=0,clip]{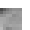}
\includegraphics[scale=2.5, angle=0,clip]{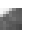}
\includegraphics[scale=2.5, angle=0,clip]{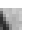}
\includegraphics[scale=2.5, angle=0,clip]{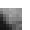}\\
\vspace{.1cm}
\includegraphics[scale=2.5, angle=0,clip]{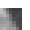}
\includegraphics[scale=2.5, angle=0,clip]{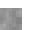}
\includegraphics[scale=2.5, angle=0,clip]{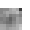}
\includegraphics[scale=2.5, angle=0,clip]{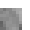}\\
\vspace{.1cm}
\includegraphics[scale=2.5, angle=0,clip]{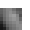}
\includegraphics[scale=2.5, angle=0,clip]{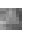}
\includegraphics[scale=2.5, angle=0,clip]{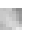}
\includegraphics[scale=2.5, angle=0,clip]{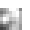}\\
(a)\\
\end{minipage}
 \begin{minipage}{.45\columnwidth}
\centering
\includegraphics[scale=2.5, angle=0,clip]{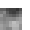}
\includegraphics[scale=2.5, angle=0,clip]{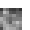}
\includegraphics[scale=2.5, angle=0,clip]{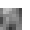}
\includegraphics[scale=2.5, angle=0,clip]{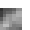}\\
\vspace{.1cm}
\includegraphics[scale=2.5, angle=0,clip]{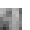}
\includegraphics[scale=2.5, angle=0,clip]{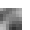}
\includegraphics[scale=2.5, angle=0,clip]{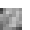}
\includegraphics[scale=2.5, angle=0,clip]{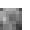}\\
\vspace{.1cm}
\includegraphics[scale=2.5, angle=0,clip]{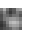}
\includegraphics[scale=2.5, angle=0,clip]{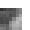}
\includegraphics[scale=2.5, angle=0,clip]{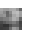}
\includegraphics[scale=2.5, angle=0,clip]{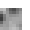}\\
\vspace{.1cm}
\includegraphics[scale=2.5, angle=0,clip]{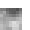}
\includegraphics[scale=2.5, angle=0,clip]{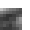}
\includegraphics[scale=2.5, angle=0,clip]{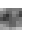}
\includegraphics[scale=2.5, angle=0,clip]{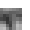}\\
(b)\\
\end{minipage}
\caption{(a) Patches of the reference image, chosen at random. (b) Simulated random blocks
following the law of the reference image. This experiment illustrates the (relative) adequacy of the {\it a contrario } model. Nevertheless, the PCA components are empirically uncorrelated, but of course not independent.    }
\label{patches}
\end{center}
\end{figure}
\subsection{{\it A Contrario} Similarity Measure between Blocks} \label{TheAContrarioModel}

\begin{definition}[{\it A contrario} model] \label{defacontrariomodel}
We call {\it a contrario block model} associated with a reference image  a  random block $\bB$  described by its (random) components  $\bB =(\bc_1, \dots, \bc_s)$  on the PCA basis of the blocks of the reference image, satisfying
 \begin{itemize}\item the  components $\bc_i$, $i=1, \dots, s$
  are independent random variables;
\item for each $i$, the law of $\bc_i$ is the empirical histogram  of the $i$-th PCA component $c_i(\cdot)$ of the blocks of the reference image.
\end{itemize}
\end{definition} The reference image will be the secondary image $I'$. Fig. \ref{patches} shows patches generated according to the above {\it a contrario}  block model and compares them to blocks picked at random in the reference image.
    The {\it a contrario} model will be  used for computing a  block resemblance probability
 as the product of the marginal resemblance probabilities of the $\bc_i$
  in the {\it a contrario} model, which is justified by the independence of $\bc_i$ and $\bc_j$ for $i\neq j$. There is a
  strong adequacy of the {\it a contrario} model to the empirical model, since
  the PCA transform  ensures that $\bc_i$ and $\bc_j$ are uncorrelated
  for $i\neq j$, a first approximation of the independence requirement.

We start by defining the resemblance probability between two blocks for a single component.
Denote by $H_{i}(\cdot):=H_{i}(c_{i}(\cdot))$   the normalized cumulative histogram of the $i$-th PCA block component $c_{i}(\cdot)$ for the secondary image $I'$.

\begin{definition}[Resemblance probability]\label{defempiricalprobability}
Let $B_{\bq}$ be a block in $I$ and $B_{\bq'}$ a block in $I'$. Define the probability that a random block $\bB$ of $I'$ resembles
$B_{\bq}$  as closely as $B_{\bq'}$ does in the $i$-th component by

\begin{equation*}
\widehat{p^{i}}_{\bq\, \bq'}=
\begin{cases}
H_{i}(\bq')                      & \text{if $H_{i}(\bq')-H_{i}(\bq) > H_{i}(\bq)  $;} \\
1-H_{i}(\bq')                    & \text{if $H_{i}(\bq)-H_{i}(\bq') > 1-H_{i}(\bq)$} \\
2|H_{i}(\bq)-H_{i}(\bq')| & \text{otherwise.}
\end{cases}
\end{equation*}
\end{definition}
Fig. \ref{cumulative_histo} illustrates how the resemblance probability $\widehat{p^{i}}_{\bq\, \bq'}$ is computed and Fig. \ref{pca_coordinate_histograms} shows  empirical marginal densities.

\begin{figure}[h]
\begin{center}
\centering
\includegraphics[scale=0.4]{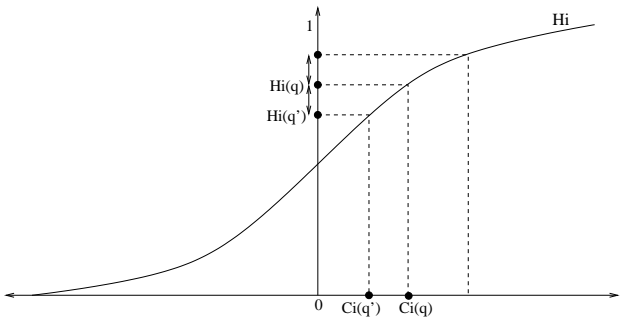}
\end{center}
\caption{Normalized cumulative histogram of $i$-th PCA coordinates of the secondary image. $c_i(\bq)$ is the $i$-th PCA coordinate value in the first image. The  resemblance probability $\widehat{p^{i}}_{\bq\, \bq'}$ for the $i$-th component is twice the distance $|H_{i}(\bq)-H_{i}(\bq')|$ when $H_{i}(\bq)$ is not too close to the values $0$ or $1$.}
\label{cumulative_histo}
\end{figure}

\begin{figure*}
\centering
\includegraphics[width=0.16\textwidth]{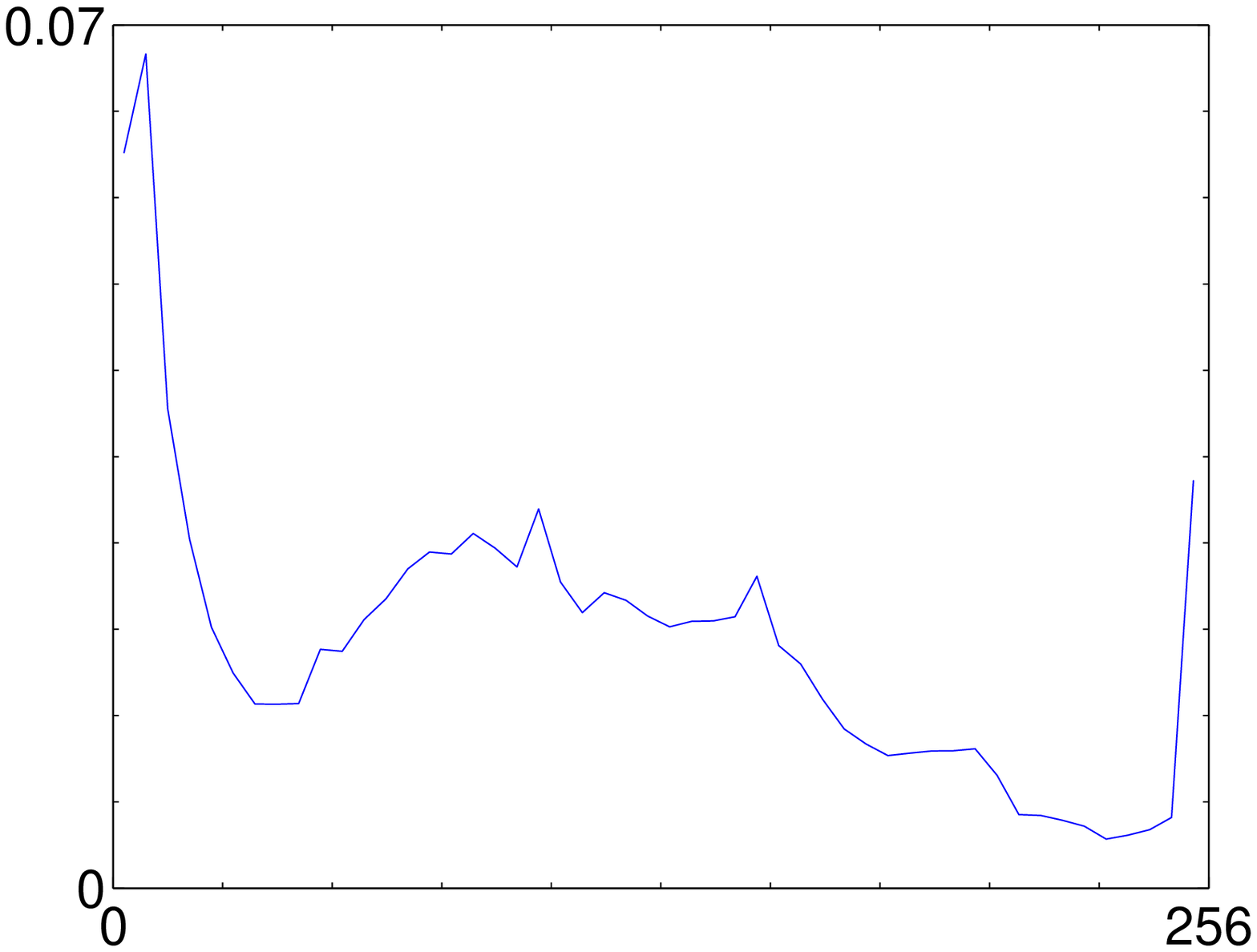}
 \includegraphics[width=0.16\textwidth]{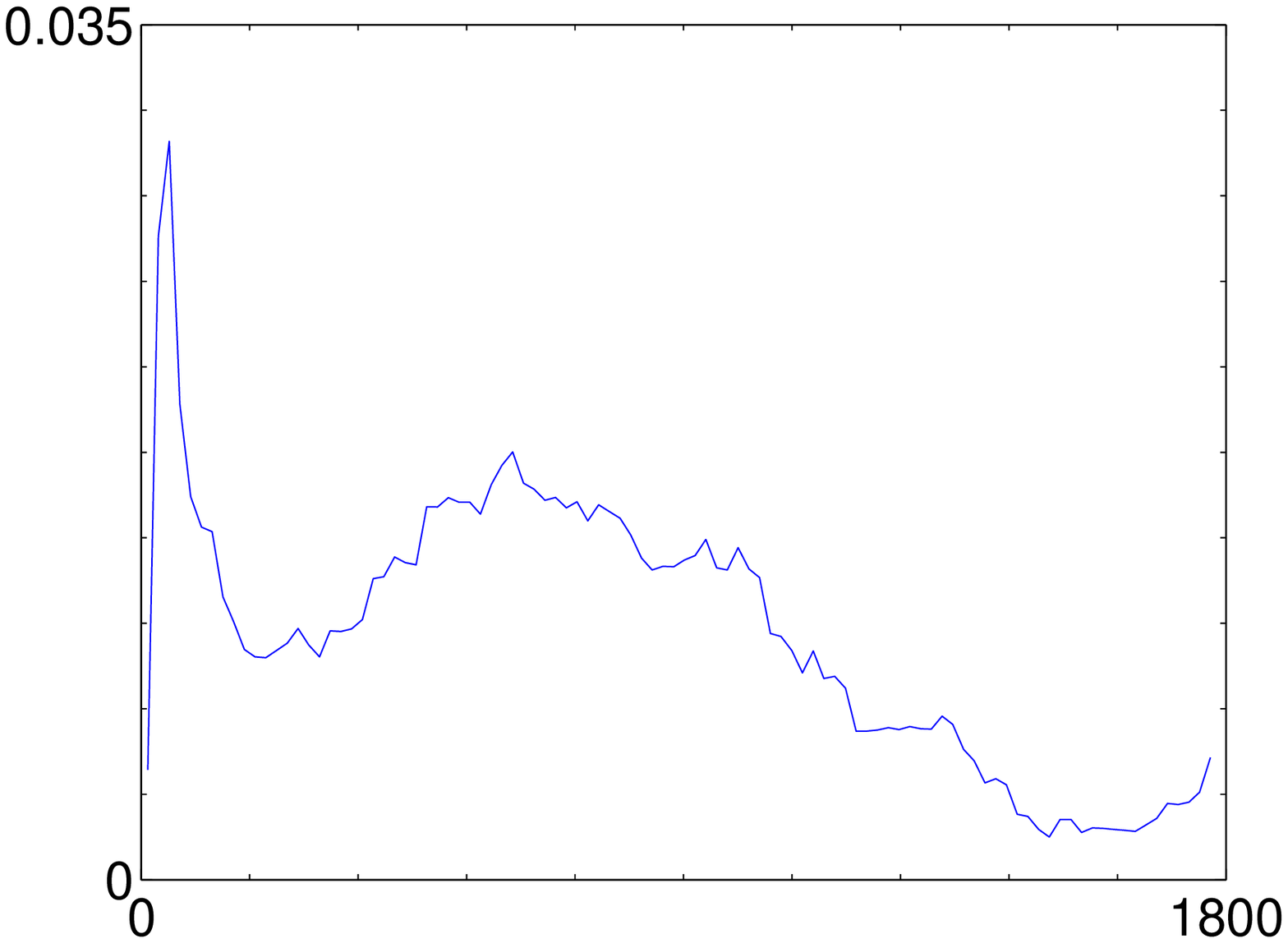}
 \includegraphics[width=0.16\textwidth]{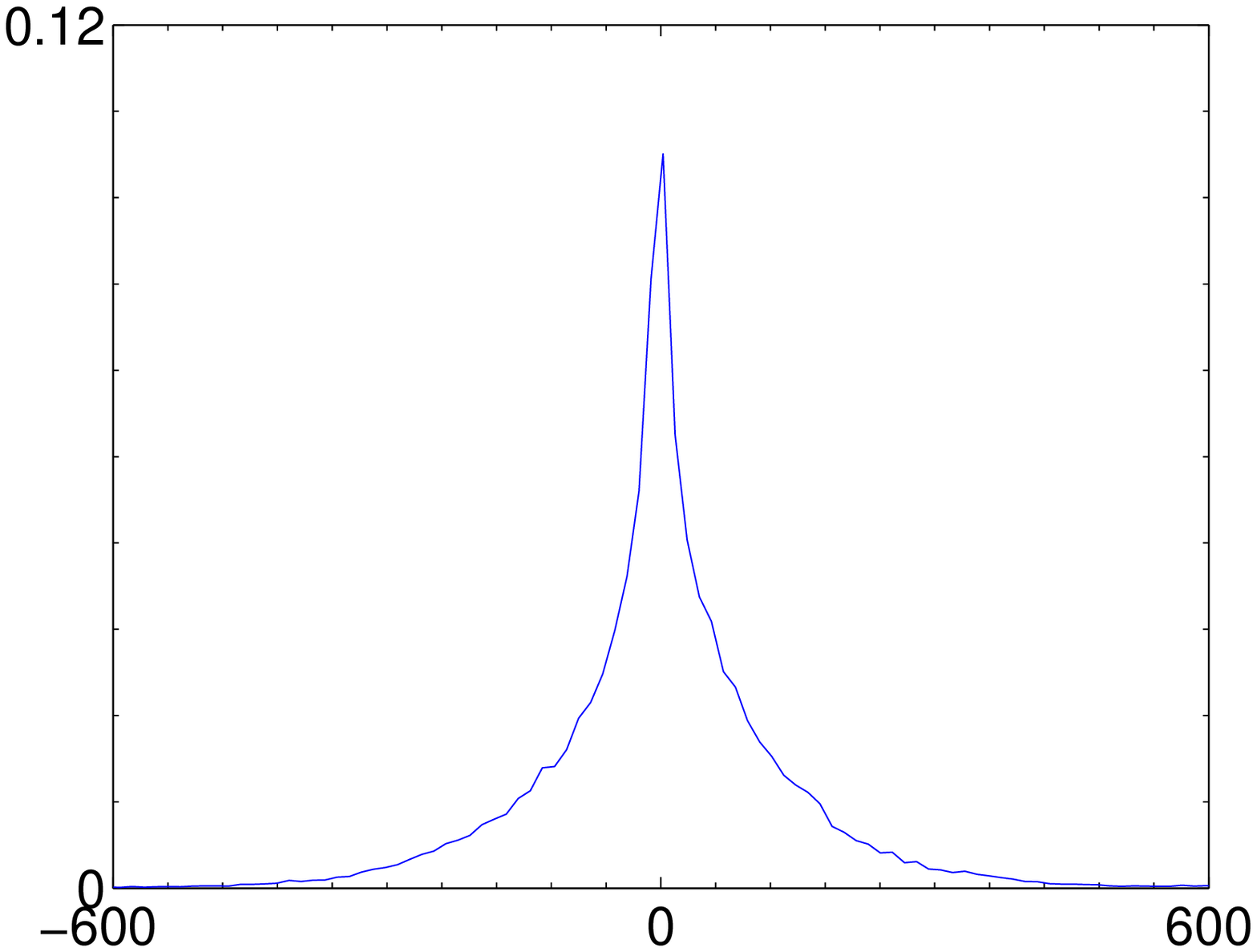}
 \includegraphics[width=0.16\textwidth]{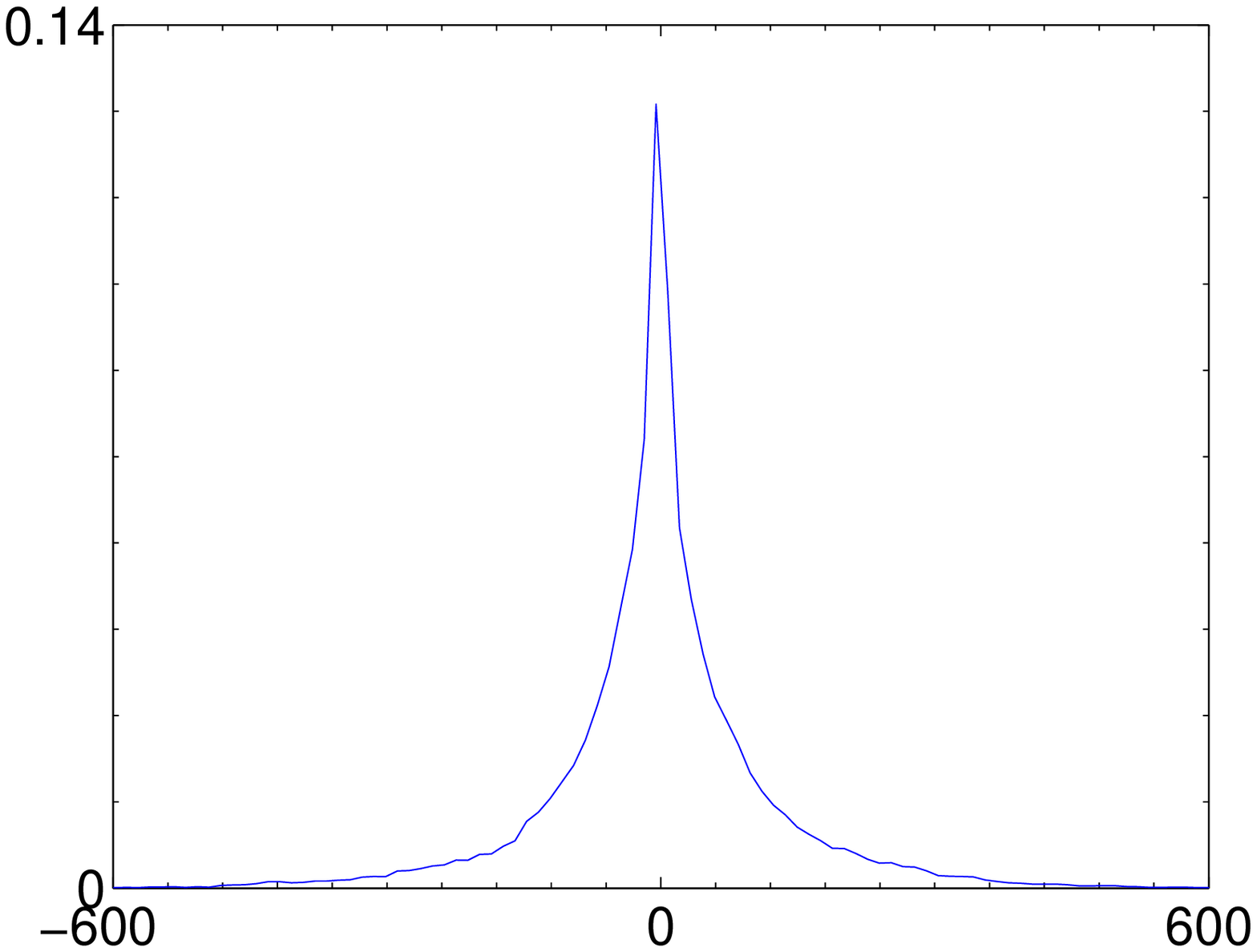}
 \includegraphics[width=0.16\textwidth]{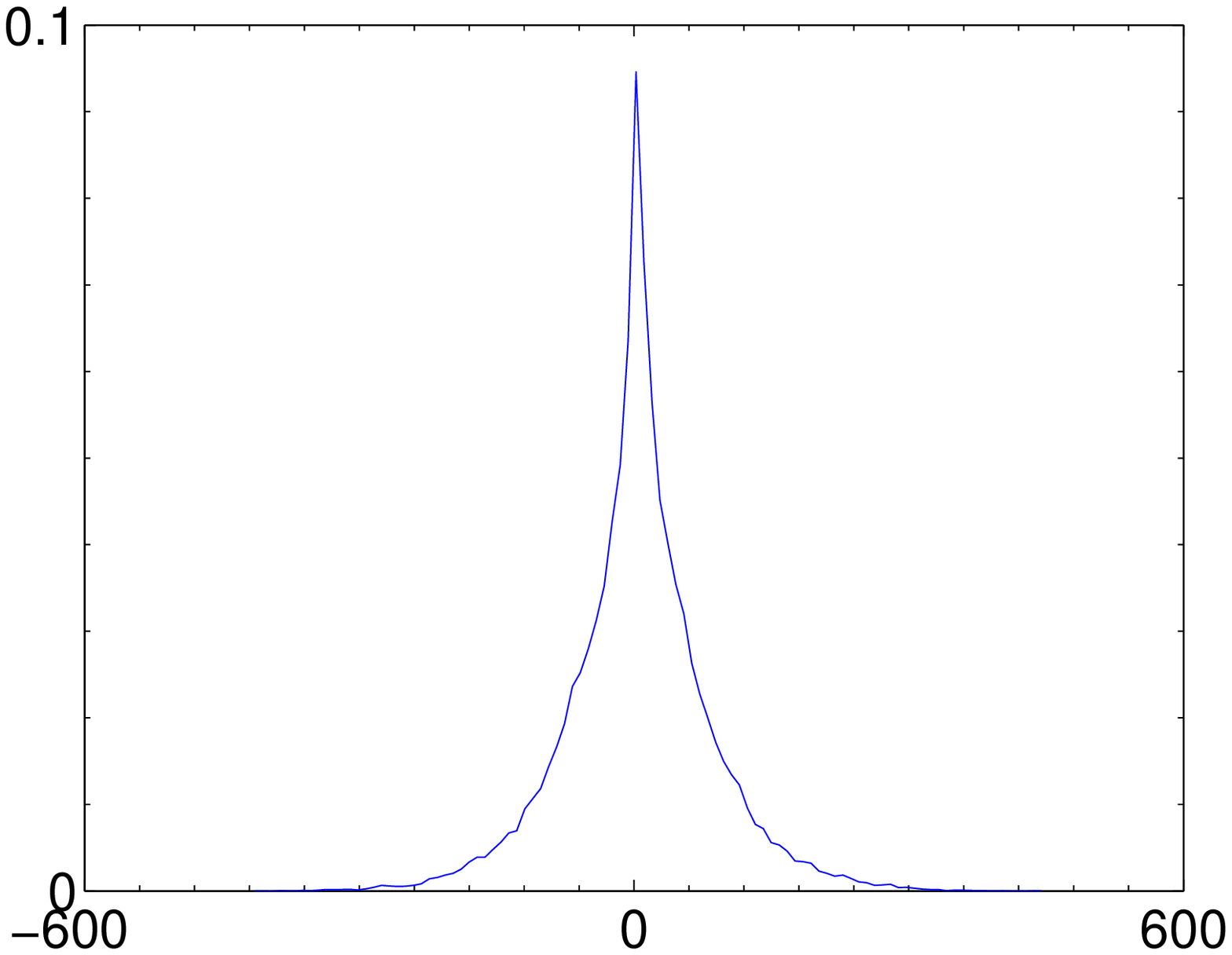}
 \includegraphics[width=0.16\textwidth]{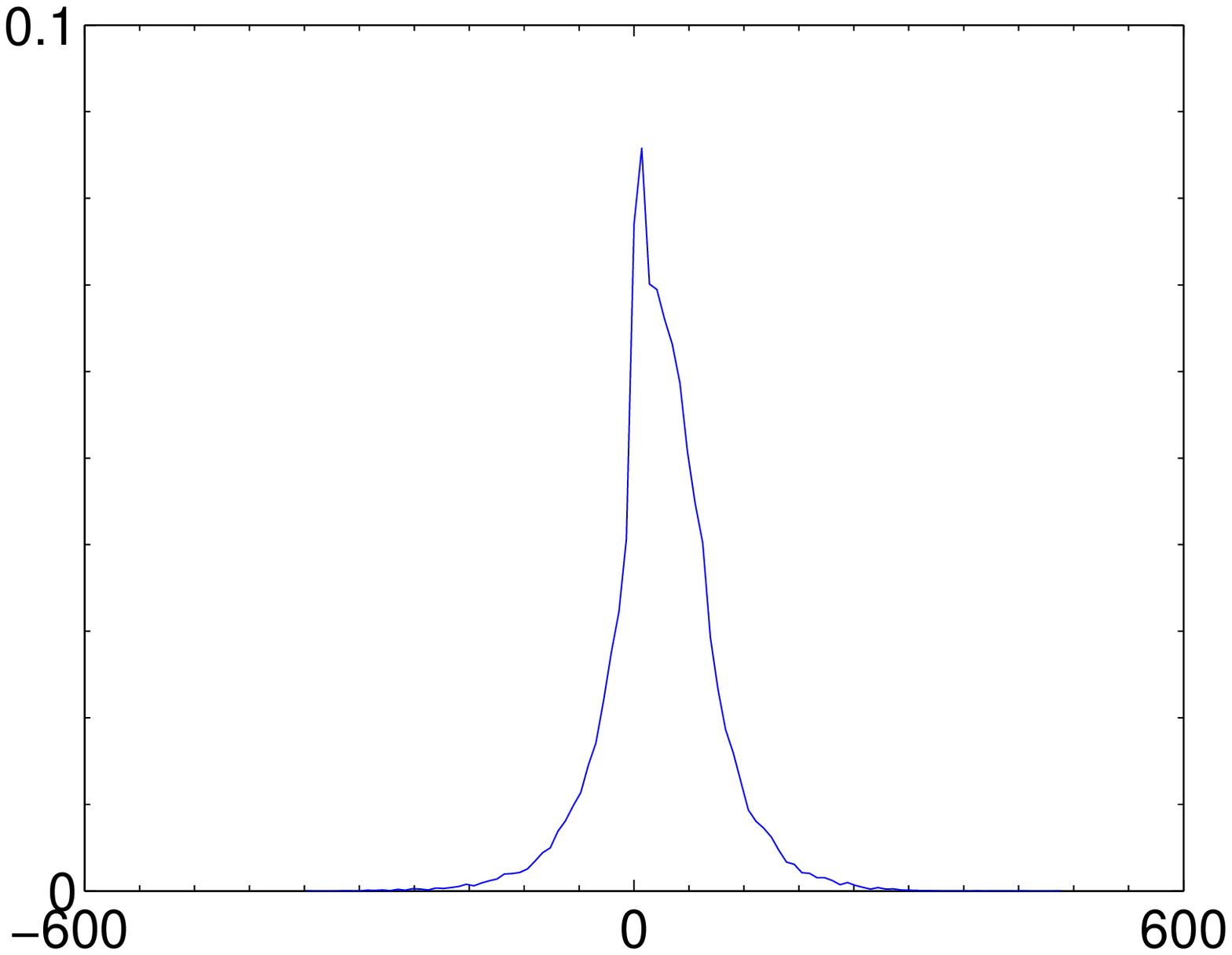}

\caption{Histogram of the reference image, followed by the  first five histograms of the block PCA coordinates.  The first principal component roughly computes a mean of the block, which explains why its histogram  is so similar to the image histogram.}
\label{pca_coordinate_histograms}
\end{figure*}

\subsection{Robust Similarity Distance}

The first principal components of $B_\bq$, being in decreasing order, contain the relevant information on the block. Thus, if two blocks are not similar
for one of the first components, they should not be matched, even if their next components are similar. Due to this fact, the
components of  $B_\bq$ and another block $B_{\bq'}$ must be compared with a non-decreasing exigency level. In addition, in the {\it a contrario} model, the number of tested correspondences should be as small as possible to reduce the number of false alarms. A quantization  of the tested resemblance probabilities is therefore required to limit the number of tests.

These  two remarks lead to define the quantized resemblance probability as the smallest non-decreasing sequence of quantized probabilities bounding from above the sequence $\widehat{ \, p^{i}}_{\bq\, \bq'}$.

\begin{definition}[Quantized probability]\label{def:quantized-non-decreasing-probability}
Let $B_{\bq}$ be a block in $I$. Let $\Pi:=\{ \pi_j=1/2^{j-1}\}_{j=1,\ldots,Q}$ be a set of quantized probability thresholds and let
$$
\Upsilon := \left\lbrace \, \bp\!=\!(p_1, \ldots, p_N)  \, \mid \,   p_i \in \Pi  , \quad p_i \leqslant p_j \; \mathrm{if} \; i<j \right\rbrace
$$
be the family of non-decreasing $N$-tuples in $\Pi^N$, endowed with the  order
$\ba\geqslant \bb$ if and only if $a_i\geqslant b_i$ for all $i$.
The quantized  probability sequence associated with the event that random block $\bB$ resembles $B_{\bq}$ as closely as $B_{\bq'}$  does in the ith component  is defined by
\begin{equation}
(p^i_{\bq\,\bq'})_{i=1, \dots N} = \underset{t \in \Upsilon}{\inf} \lbrace t\; \mid\; t\geqslant  (\widehat{p^{i}}_{\bq\,\bq'})_{i=1,\dots N} \rbrace \,.
\end{equation}
\end{definition}
Notice that the infimum   $(p^1_{\bq \,\bq'}, \ldots, p^N_{\bq\,\bq'})
 $ is uniquely defined and belongs to $\Upsilon$. Put another way the quantized probability vector $(p^1_{\bq \,\bq'}, \ldots, p^N_{\bq\,\bq'})$ is the smallest upper bound of the resemblance probabilities $(\widehat{p^{1}}_{\bq\,\bq'},\ldots,\widehat{p^{N}}_{\bq\,\bq'})$ that can be found in $\Upsilon$.
Fig. \ref{non_decreasing_proba} illustrates the quantized probabilities in two cases.

\begin{figure}[h]
\begin{center}
\centering
\includegraphics[scale=0.33]{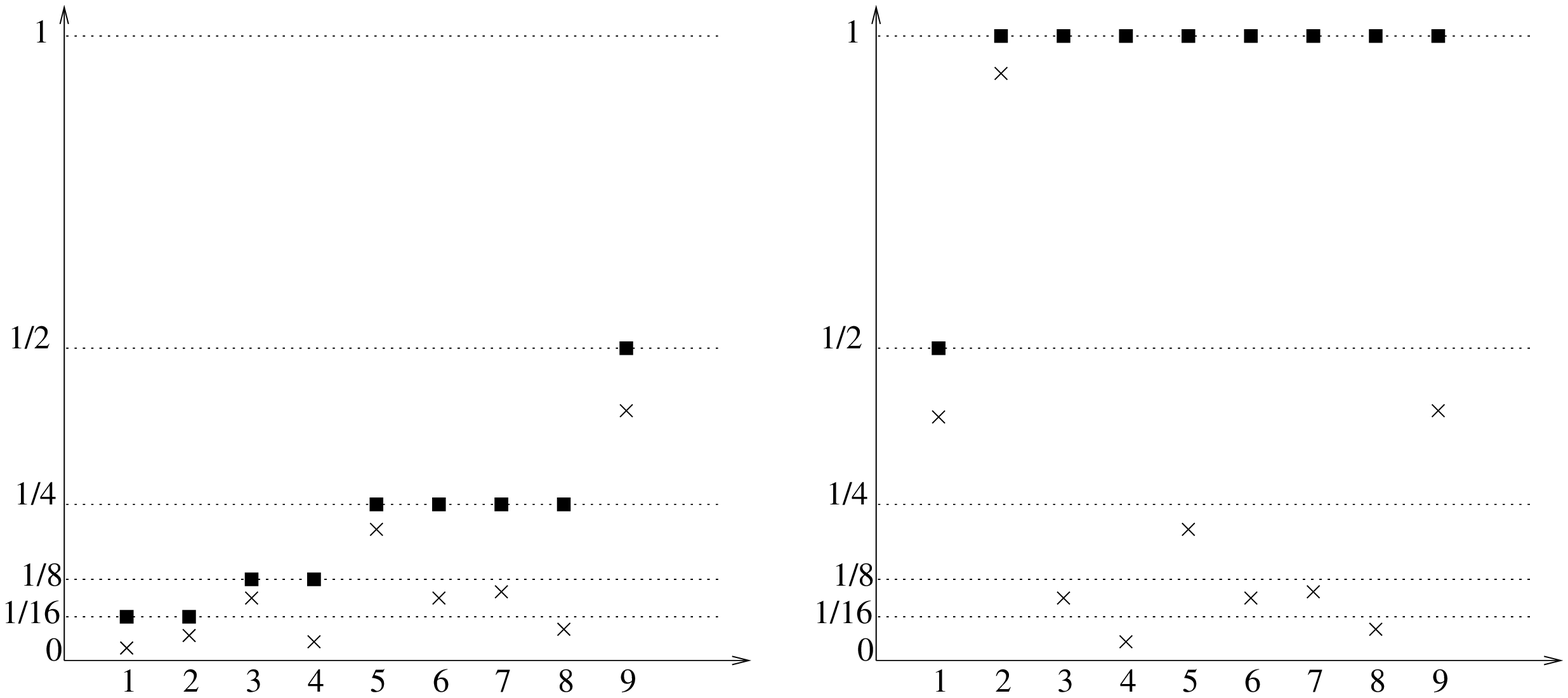}
\end{center}
\caption{Two examples of probabilities with $Q=5$ and $N=9$. The probability thresholds are in ordinate and the features in abscissa. The resemblance probabilities are represented with small crosses and quantized probabilities with small squares. The example on the left has a final probability of $\nicefrac{1}{(16^2 \cdot 8^2 \cdot 4^4 \cdot 2)}$. The right example has the same resemblance probabilities excepting for features $1$ and $2$, but the final probability is $\nicefrac{1}{2}$. Only the configuration on the left corresponds to a meaningful match.}
\label{non_decreasing_proba}
\end{figure}

\begin{proposition}[Quantized resemblance probability]\label{resemblanceprobability}
Let $B_{\bq} \in I$ and $B_{\bq'}  $ be two   blocks. Assume the principal components $i\in \{1, 2, \dots, s\}$ are reordered so that $|c_1(\bq)|\geqslant |c_2(\bq)|\geqslant \dots \geqslant |c_s(\bq)|$.  The  probability of the event {\rm
``the random block
 $\bB$ has its $N$ first components as similar to those of $B_{\bq}$ as to those of $B_{\bq'}$''} is
\begin{equation}
   Pr_{\bq\,\bq'} =   \prod_{i=1}^{N}p^{i}_{\bq\,\bq'}\;.
\end{equation}
\end{proposition}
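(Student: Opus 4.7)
The plan is to reduce the statement to a factorization of marginal probabilities, using the two defining axioms of the {\it a contrario} block model in Definition~\ref{defacontrariomodel}: each $\bc_i$ has marginal law $H_i$, and the components $\bc_1,\dots,\bc_s$ are independent. Accordingly I would first exhibit the resemblance event on the $i$-th component as $\{\bc_i\in A_i\}$ for a Borel set $A_i\subset\mathbb{R}$ of $H_i$-mass equal to $p^i_{\bq\,\bq'}$, and then multiply over $i$ by independence.

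First I would unwind Definition~\ref{defempiricalprobability}. Applying the probability integral transform, $H_i(\bc_i)$ is uniform on $[0,1]$ under the {\it a contrario} model, so the event that $\bc_i$ is at least as close (in $H_i$-quantile distance) to $c_i(\bq)$ as $c_i(\bq')$ is has probability
\[
\Proba\bigl[\,|H_i(\bc_i)-H_i(\bq)|\leqslant |H_i(\bq')-H_i(\bq)|\,\bigr].
\]
A short case analysis on whether the symmetric interval $[H_i(\bq)-d,H_i(\bq)+d]$, with $d=|H_i(\bq')-H_i(\bq)|$, fits inside $[0,1]$ recovers exactly the three cases of Definition~\ref{defempiricalprobability}. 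Thus $\widehat{p^i}_{\bq\,\bq'}$ equals the $H_i$-measure of a natural resemblance region $R_i\subset\mathbb{R}$ containing $c_i(\bq')$. Since Definition~\ref{def:quantized-non-decreasing-probability} guarantees $p^i_{\bq\,\bq'}\geqslant \widehat{p^i}_{\bq\,\bq'}$, I would then enlarge $R_i$ outward to a Borel set $\tilde R_i\supseteq R_i$ with $H_i(\tilde R_i)=p^i_{\bq\,\bq'}$ (always possible since $H_i$ takes all values in $[0,1]$). The event of the proposition is by construction $\bigcap_{i=1}^N\{\bc_i\in\tilde R_i\}$, and the independence of $\bc_1,\dots,\bc_N$ built into Definition~\ref{defacontrariomodel} yields
\[
Pr_{\bq\,\bq'}=\Proba\!\left[\bigcap_{i=1}^N\{\bc_i\in\tilde R_i\}\right]=\prod_{i=1}^N\Proba[\bc_i\in\tilde R_i]=\prod_{i=1}^N p^i_{\bq\,\bq'}.
\]

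The only delicate point is the bookkeeping between the raw resemblance probability $\widehat{p^i}_{\bq\,\bq'}$ of Definition~\ref{defempiricalprobability} and its quantized non-decreasing upper bound $p^i_{\bq\,\bq'}$ of Definition~\ref{def:quantized-non-decreasing-probability}: one must be explicit that the event whose probability the proposition computes is by convention the quantized enlarged event $\{\bc_i\in\tilde R_i\}$ rather than the raw one $\{\bc_i\in R_i\}$, and that the local reordering $|c_1(\bq)|\geqslant\dots\geqslant |c_s(\bq)|$ is a deterministic permutation applied to coordinates of $\bB$ which are independent in any order. Once those identifications are in place, no model-specific computation is needed and the claim follows in one line from independence.
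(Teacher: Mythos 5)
Your proposal is correct and takes essentially the same route as the paper: the paper's own proof is the one-line observation that, by the independence of the components $\bc_i$ postulated in Definition~\ref{defacontrariomodel}, the probability of the joint resemblance event factors into the product of the marginal (quantized) resemblance probabilities. Your extra bookkeeping---recovering the three cases of Definition~\ref{defempiricalprobability} as the $H_i$-mass of a resemblance region and enlarging that region to mass $p^i_{\bq\,\bq'}$---only makes explicit the convention the paper leaves implicit in Definition~\ref{def:quantized-non-decreasing-probability}, namely that the event being measured is the quantized one.
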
  This is a direct consequence of
  Def. \ref{defacontrariomodel}, the principal
components of $\bB$ being independent. The resemblance probability is the product of the marginal resemblance probabilities.
As classic in statistical decision, we could stop and use the above resemblance probability. But, despite having a low resemblance probability for each $Pr_{\bq\, \bq'}$, the large number of resemblance tests allows for a very large number of false matches. Our next goal therefore is to define a number of false alarms, and not a probability, as the right criterion. To this aim, we need to estimate the number of tests.

\subsection{Number of Tests}
The number of tests for comparing all the blocks of image $I$ with all the blocks in image $I'$ is the product of three factors. The first one is the image size
$\# I$. The second is the size of the search region denoted by $S'\subset I' $.  We mentioned before that the search is
done on the epipolar line. In practice, a segment of this line is enough. If $\bq=(q_1,q_2)$ is the point of reference it is enough to look for
$\bq'=(q'_1,q_2) \in I'$ such that $q'_1 \in [q_1-R,q_1+R]$ where $R$ is a fixed integer larger than the maximal possible disparity. The third and most important factor is the number of different non-decreasing probability distributions $FC_{N,Q}=\#\Upsilon$ that can be envisaged. Of course not all of these tests are performed, but only the one indicated by the observed block $B_{\bq'}$. Yet, the choice of  this unique test is steered by an {\it a posteriori} observation, while the calculation of the expectation of the number of false alarms (NFA) must be calculated {\it a priori}. Thus we must compute the NFA as though all comparisons for all quantized decreasing probabilities were effectuated.
 A test can never be defined {\it a posteriori},   it cannot be steered by the observation. Thus the number of tests is not the number of tests effectively  performed. There are $\#\Upsilon$  ways  each couple of blocks could {\it a priori} be  compared. In other terms  $\#\Upsilon$  different distances are {\it a priori} tested. Theorem 1 will ultimately justify the following definition.

\begin{definition}[Number of tests]\label{defnumberoftest} With the above notation we call the number of tests for matching two images $I$ and $I'$  the integer
$
    N_{test}= \#I \cdot \#S' \cdot \# \Upsilon \; = n \,(2R+1) \, FC_{N,Q}.
$
\end{definition}

\begin{lemma}
With the above notation,
\begin{equation}
FC_{N,Q}=\sum_{t=0}^{Q-1} (t+1)\cdot \binom{N+Q-t-3}{Q-t-1} \;,
\end{equation}
\end{lemma}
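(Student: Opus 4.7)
The plan is to reduce $\#\Upsilon$ to a classical stars-and-bars count and then re-index the resulting sum to match the stated form. Since the map $j\mapsto \pi_j=1/2^{j-1}$ is strictly decreasing, I would first write every $p_i\in\Pi$ as $p_i=1/2^{j_i-1}$ with $j_i\in\{1,\ldots,Q\}$; the monotonicity condition $p_i\leqslant p_j$ for $i<j$ then translates into $j_i\geqslant j_j$. Thus $\Upsilon$ is in bijection with the set of non-increasing integer sequences $Q\geqslant j_1\geqslant j_2\geqslant\cdots\geqslant j_N\geqslant 1$, and counting these sequences is the whole problem.

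Next, assuming $N\geqslant 2$, I would condition on the second index $j_2=b\in\{1,\ldots,Q\}$. Given $j_2=b$, the head $j_1$ is free to take any value in $\{b,b+1,\ldots,Q\}$, contributing $Q-b+1$ choices, while the tail $(j_3,\ldots,j_N)$ ranges over non-increasing sequences of length $N-2$ with values in $\{1,\ldots,b\}$. Because non-increasing sequences of length $L$ drawn from a set of size $b$ are in bijection with multisets of size $L$ from $b$ elements, the standard stars-and-bars identity gives $\binom{N+b-3}{b-1}$ such tails. Summing over $b$ yields
\begin{equation*}
\#\Upsilon \;=\; \sum_{b=1}^{Q}(Q-b+1)\binom{N+b-3}{b-1}.
\end{equation*}

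Finally I would apply the substitution $t=Q-b$, which converts $Q-b+1$ into $t+1$, $b-1$ into $Q-t-1$, and $N+b-3$ into $N+Q-t-3$; as $b$ runs from $1$ to $Q$ the new index $t$ runs from $Q-1$ down to $0$, and reversing the order of summation produces exactly
\begin{equation*}
FC_{N,Q} \;=\; \sum_{t=0}^{Q-1}(t+1)\binom{N+Q-t-3}{Q-t-1},
\end{equation*}
as required. No step is a real obstacle; the only insight worth highlighting is the choice of conditioning variable. Conditioning on $j_1$ gives the cleaner Vandermonde-type identity $\#\Upsilon=\binom{N+Q-1}{Q-1}$, but it does not reproduce the arithmetic factor $(t+1)$ appearing in the statement. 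It is the freedom of $j_1$ given $j_2$ that generates that factor, which is what dictates conditioning on $j_2$ instead. The boundary cases $b=1$ (tail forced to be constantly $1$, coefficient $Q$) and $b=Q$ (no freedom on $j_1$, coefficient $1$) are both handled correctly by the same formula.
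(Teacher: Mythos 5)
Your proof is correct, but it uses a genuinely different decomposition from the paper's. The paper introduces the auxiliary quantity $\overline{FC}_{N,Q}$, the number of non-decreasing $f:[1,N]\to[1,Q]$ with both endpoints pinned ($f(1)=1$, $f(N)=Q$), computes $\overline{FC}_{N,Q}=\binom{N+Q-3}{Q-1}$ (stars and bars on the increments), and then partitions $\Upsilon$ according to the span $f(N)-f(1)+1=Q-t$ of the sequence: the factor $t+1$ counts the possible placements of an interval of length $Q-t$ inside $[1,Q]$, giving $FC_{N,Q}=\sum_{t=0}^{Q-1}(t+1)\,\overline{FC}_{N,Q-t}$. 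You instead partition by the second coordinate $j_2=b$ of the equivalent non-increasing integer sequence: the factor $Q-b+1=t+1$ counts the free head $j_1\in\{b,\dots,Q\}$, and $\binom{N+b-3}{b-1}$ counts the length-$(N-2)$ tails with values in $\{1,\dots,b\}$. The two partitions of $\Upsilon$ are different, yet yield the same terms, so the combinatorial meaning you assign to $(t+1)$ (freedom of $j_1$ given $j_2$) differs from the paper's (translates of the range interval); both are valid. Your route is slightly more self-contained (no auxiliary $\overline{FC}$) and observes the closed form $\binom{N+Q-1}{Q-1}$ as a byproduct, while the paper's makes the ``number of placements times endpoint-pinned count'' structure of the stated formula explicit. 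Both arguments tacitly assume $N\geq 2$ (your tail of length $N-2$, the paper's formula for $\overline{FC}$), which is harmless since $N=9$ in the application; calling the hockey-stick summation a ``Vandermonde-type identity'' in your aside is only a naming quibble.
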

where
$$
FC_{N,Q} :=\# \{ \,f:[1,N]\rightarrow [1,Q] \, \mid\, f(x)\leqslant f(y),\, \forall x \leq y \, \} .
$$

In order to prove this result we write
\begin{align}
\overline{FC}_{N,Q} :=\# \{ \, & f:[1,N]\rightarrow [1,Q] \, \mid \,f(1)=1, \; f(N)=Q; \notag \\
 &  f(x)\leqslant f(y),\, \forall x \leqslant y \, \} \;. \notag
\end{align}

Since $\displaystyle{FC_{N,Q}=\sum_{t=0}^{Q-1} (t+1)\overline{FC}_{N,Q-t}}$ and\\ $\overline{FC}_{N,Q}=\binom{N+Q-3}{Q-1}$ the result follows.

We are now in a position to define a number of false alarms, which will control the overall number of false detections on the whole image.

\begin{definition}[Number of false alarms]\label{defNFA}
Let $B_{\bq} \in I$ and $B_{\bq'} \in I'$ be two observed  blocks. Assume the principal components $i\in \{1, 2, \dots, s\}$ are reordered so that $|c_1(\bq)|\geqslant |c_2(\bq)|\geqslant \dots \geqslant|c_s(\bq)|$.  We define the  Number of False Alarms associated with event {\rm
``the random block
 $\bB$ has its $N$ first components as similar to those of $B_{\bq}$ as those of $B_{\bq'}$ are''} by
$$
    NFA_{\bq,\bq'} = N_{test}  \cdot Pr_{\bq\,\bq'}= N_{test}\cdot \prod_{i=1}^{N}p_{\bq\,\bq'}^{i},
$$
where $N_{test}$ comes form Def. \ref{defnumberoftest} and $Pr_{\bq\,\bq'}$ is the probability that the random block $\bB$  have its first $N$ PCA components as similar to those of
$B_{\bq}$  as those of  $B_{\bq'}$ are (Prop. \ref{resemblanceprobability}).
\end{definition}

\begin{definition}[$\epsilon$-meaningful match]\label{def:meaningful_match}
A pair of pixels $\bq$ and $\bq'$ in a stereo pair $(I, I')$ is an $\epsilon$-meaningful match if
\begin{equation}
    NFA_{\bq\,\bq'} \leqslant  \epsilon \;.
\end{equation}
\end{definition}


\subsection{The Main Theorem}
 As it is computed above the NFA dimensionality is that of  a number (of false alarms) {\it per image}. An alternative would be to  measure the NFA as a number of false alarms per pixel, in which case the number of tests would not contain the cardinality of the image factor $\# I$.
  With the proposed NFA, it is up to the users to decide which number of false alarms per image they consider tolerable.
  The NFA of a match actually gives a security level: the smaller the NFA, the more meaningful the match intuitively is.  But Thm. \ref{Laseuleproposition} will give the real meaning of the  NFA. To state it, we will use a clever trick used by  Shannon in his information theory \cite{shannon2001mathematical}, page 22-23, namely to treat the probability  of an event as random variable and to play with its expectation. Here the NFA will become a random variable, replacing $B_{\bq'}$ with $\bB$ in its definition.

In the {\it a contrario} model, each comparison of $B_\bq$ with some $B_{\bq'}$ is interpreted as a comparison of $B_\bq$ to a trial of the random block model $\bB$. In total, $B_\bq$ is compared with $2R+1 other blocks$ for each $\bq\in I$. So, we are led to distinguish for each $\bq$  $(2R+1)$ trials which are as many i.i.d. random blocks $\bB^{\bq, j}$, $j\in\{1, 2, \dots 2R+1\}$, all with the same law as $\bB$. They model {\it a contrario} the $(2R+1)$ trials by which $B_\bq$ is matched to $(2R+1)$ blocks in $I'$. We are interested in the expectation of the number of such  trials being successful  (i.e. $\varepsilon$-meaningful), ``just by chance.''

Consider the  event $E_{\bq,j}$ that a random block $\bB^{\bq, j}$ in the {\it a contrario} model with reference image  $I'$ meaningfully matches $B_\bq$. If this happens, it is obviously a {\it false alarm}. We shall denote
by $\chi_{\bq,j}$ the random characteristic function associated with this event, with the convention that $\chi_{\bq, j}=1$ if $E_{\bq, j}$ is true, $\chi_{\bq, j}=0$ otherwise.
Similarly $NFA_{\bq,j}$ and $p^i_{\bq,j}$ are the NFA and quantized probabilities associated with the event $E_{\bq,j}$.

\begin{theorem}\label{Laseuleproposition}
Let  $\Gamma=\Sigma_{\bq\in I, j\in \{1, \dots, 2R+1\}} \chi_{\bq, j}$ be the random variable representing the number  of occurrences of an
$\epsilon$-meaningful match between a deterministic patch in the first image and a random patch in the second image.
Then the expectation of $\Gamma$ is less than or equal to $\epsilon$.
\end{theorem}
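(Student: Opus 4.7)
The plan is to combine linearity of expectation with a union bound over the finite set $\Upsilon$ of admissible quantized probability vectors. First, write
$$
E[\Gamma] = \sum_{\bq \in I}\sum_{j=1}^{2R+1} \Proba(E_{\bq,j}),
$$
so it suffices to show that each term is at most $\epsilon / (n(2R+1)) = \epsilon \cdot FC_{N,Q}/N_{test}$. Fix $\bq$ and $j$. Since the quantized vector $(p^1_{\bq,j},\dots,p^N_{\bq,j})$ takes its values in the finite set $\Upsilon$, I partition the event $E_{\bq,j} = \{NFA_{\bq,j}\leq \epsilon\} = \{\prod_i p^i_{\bq,j} \leq \epsilon/N_{test}\}$ according to which $\bp = (p_1,\dots,p_N) \in \Upsilon$ this vector equals.

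The heart of the proof is the pointwise bound $\Proba((p^i_{\bq,j})_i = \bp) \leq \prod_{i=1}^{N} p_i$ for every $\bp \in \Upsilon$. This bound rests on two ingredients. First, by Def.~\ref{def:quantized-non-decreasing-probability} the quantized vector dominates $(\widehat{p^i})$ componentwise, so $\{(p^i_{\bq,j})_i = \bp\} \subseteq \{\widehat{p^i}_{\bq,\bB^{\bq,j}} \leq p_i \text{ for all } i\}$. Second, by Def.~\ref{defacontrariomodel} the PCA components of $\bB^{\bq,j}$ are independent, so the latter event factorizes:
$$
\Proba\bigl(\widehat{p^i}_{\bq,\bB^{\bq,j}}\leq p_i,\ \forall i\bigr) = \prod_{i=1}^N \Proba\bigl(\widehat{p^i}_{\bq,\bB^{\bq,j}}\leq p_i\bigr).
$$
It then remains to verify the \emph{p-value property} $\Proba(\widehat{p^i}_{\bq,\bB}\leq \alpha) \leq \alpha$ for each $i$ and each $\alpha\in[0,1]$, which follows from a short case analysis of Def.~\ref{defempiricalprobability}: by construction, $\widehat{p^i}_{\bq,\bB}$ equals the Lebesgue measure of an explicit subset of $[0,1]$ containing $H_i(\bq)$ inside which $H_i(\bB_i)$ must fall, and $H_i(\bB_i)$ is uniform on $[0,1]$ under the {\it a contrario} law.

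Putting these together, each summand in the decomposition satisfies $\Proba((p^i_{\bq,j})_i = \bp) \leq \prod_i p_i \leq \epsilon/N_{test}$, and there are at most $|\Upsilon| = FC_{N,Q}$ admissible $\bp$, hence
$$
\Proba(E_{\bq,j}) \leq FC_{N,Q}\cdot \frac{\epsilon}{N_{test}}.
$$
Summing over the $n(2R+1)$ pairs $(\bq,j)$ and using $N_{test} = n(2R+1)\, FC_{N,Q}$ from Def.~\ref{defnumberoftest} yields $E[\Gamma]\leq \epsilon$.

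The only real subtlety will be the p-value property for $\widehat{p^i}$, because of the three-case split that handles the boundary effects when $H_i(\bq)$ is near $0$ or $1$; one must check that the three definitions glue together into a single ``two-sided tail'' construction whose total mass below $\alpha$ is always at most $\alpha$. Everything else---independence, union bound over $\Upsilon$, arithmetic with $N_{test}$---is mechanical.
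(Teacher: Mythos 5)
Your proposal is correct and follows essentially the same route as the paper's proof: linearity of expectation, reduction of $\{NFA_{\bq,j}\leq\epsilon\}$ to events indexed by $\bp\in\Upsilon$ with $\prod_i p_i\leq\epsilon/N_{test}$, a factorized bound $\leq\prod_i p_i$ via independence of the PCA components, and the count $\#\Upsilon=FC_{N,Q}$ cancelling against $N_{test}=n(2R+1)FC_{N,Q}$. The only (harmless) differences are that you partition by the exact value of the quantized vector where the paper uses a non-disjoint union with the Bonferroni bound, and that you state explicitly the p-value property $\Proba(\widehat{p^i}_{\bq,\bB}\leq\alpha)\leq\alpha$, which the paper leaves implicit in Definition \ref{defempiricalprobability}.
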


\begin{proof}

We have
\begin{equation*}
\chi_{\bq, j} = \left\{
\begin{array}{cc}
1, & \; \text{if } NFA_{\bq, j}\leqslant\epsilon;\\
0, & \; \text{if } NFA_{\bq, j} > \epsilon.
\end{array} \right.
\end{equation*}
Then, by the linearity of the expectation
$$ \mathbb{E} [\Gamma] = \sum_{\bq, j} \mathbb{E}[\chi_{\bq,j}] = \sum_{\bq, j} \Proba \left[ NFA_{\bq, j} \leqslant \epsilon  \right]. $$

The probability inside the above sum can be computed by  Definitions \ref{defNFA} and
\ref{defempiricalprobability}:
\begin{equation*}
 \mathbb{P} \left[ NFA_{\bq, j} \leqslant \epsilon \right]
 = \mathbb{P} \left[ \, \prod_{i}^{N} p^{i}_{\bq, j} \leqslant \frac{\epsilon}{N_{test}} \, \right]
\end{equation*} There are many probability $N$-tuples
 $p=(p^i_{\bq, j})_{i=1,\dots, N}$ permitting to obtain the inequality inside the above probability. Nevertheless, the probabilities having  been quantized, we can reduce it to a (non-disjoint) union of events, namely all  $p \in \Upsilon$ such that $ \prod_{i} p_i \leqslant \epsilon/N_{test}$.  By the Bonferroni correction the considered probability can be upper-bounded by the sum of their probabilities sum. In addition the intersection below
involves only independent events according to our background model. Thus
\begin{align*}
\mathbb{P} \left[  \prod_{i}^{N} p^{i}_{\bq, j} \leqslant \frac{\epsilon}{N_{test}}\right]
& = \mathbb{P} \left[ \bigcup_{\substack{ p \in \Upsilon\\ \prod_{i} p_i \leqslant \epsilon/N_{test} }}  \bigcap_{i} \big(p^{i}_{\bq, j}\leqslant  p_i \big) \right]\\
&\leqslant \sum_{\substack{p \in \Upsilon \\ \prod_{i} p_i \leqslant \epsilon/N_{test} }} \prod_{i}    p_i    \\
&  \leqslant \; \dfrac{\epsilon}{\#I \, \#S'},\\
\end{align*}
where we have also used  $N_{tests} =
\#I \, \#S' \, \#\Upsilon$.
So we have shown that
$$
\mathbb{E}[\Gamma] = \sum_{\bq, j} \mathbb{E} \left[ \chi_{\bq, j} \right] \leqslant \sum_{\bq,  j}
\dfrac{\epsilon}{\#I \, \#S'} = \epsilon.
$$

\end{proof}

The $\epsilon$ parameter is the only legitimate parameter of the method, the other ones namely the block size $\sqrt s$, the number of principal components $N$ and the number of quantized probability thresholds $Q$ can be fixed once and for all for a given SNR (Signal to Noise Ratio). All experiments are made with a common SNR, but a lower SNR would allow smaller blocks and consequently a different set of parameters.  The question of how many false alarms should be acceptable in a stereo pair depends on the size of the images. In all experiments with moderate size images, of the order of $10^6$ pixels,  the decision was to fix $\varepsilon=1$. Thanks to Theorem \ref{Laseuleproposition} this means that it is expected to find one false alarm in average for images with $10^6$ pixels. Then, fixing $\varepsilon$ makes the method into a parameterless method for all moderately sized images.

\section{The Self-Similarity Threshold}
\label{sec:autosimilarity-threshold}

Urban environments contain many periodic local structures (for example the windows on a fa\c cade). Since, in general, the number of repetitions is insignificant with respect to the number of blocks that have been used to estimate the empirical {\it a contrario} probability distributions, the
{\it a contrario} model does not learn this repetition, and can be fooled by such repetitions, thus signaling a
significant match for each repetition of the same structure. Of course, one of those significant matches is the correct
one, but chances are that the correct one is not also the most significant. In such a situation two choices are left:
\emph{(i)} try to match the whole set of self-similar blocks of $I$ as a single multi-block (typically, global
methods such as graph-cuts do that implicitly); or \emph{(ii)} remove any (probably wrong) response in the case where
the stroboscopic effect is detected. The first alternative would lead to errors anyway, if the similar blocks do not have
the same height, or if some of them are out of field in one of the images. Fortunately, stereo pair block-matching yields a straightforward  adaptive threshold. A distance function $d$ between blocks being defined, let $\bq$ and $\bq'$ be points in the reference and secondary images respectively that are candidates to match with each other. The match of $\bq$ and $\bq'$ will be accepted if the following self-similarity (SS) condition is satisfied:
\begin{equation}\label{test_ss}
 d(B_{\bq},B_{\bq'}) < min\{ d(B_{\bq},B_{\br}) | \; r \in I \cap S(\bq)\}
\end{equation}
where $S(\bq)=[q_1-R\,,\,q_1+R] \, \backslash \,\{q_1,\,q_1+1,\,q_1-1\}$ and $R$ is the search range.
 As noted earlier, the search for correspondences can be restricted to the epipolar line. This is why the automatic threshold is restricted to $S(\bq)$. The distance used in the self-similarity threshold is the sum of squared differences (SSD) of all the pixels in the block and the block size is the same than the block size use for ACBM.

Computing the similarity of matches in one of the images is not a new idea in stereovision. In \cite{Manduchi99} the authors define the \textit{distinctiveness} of an image point $\bq$ as the perceptual distance to the most similar point other than itself in the search window. In particular, they study the case of the auto-SSD function (Sum of Squared Differences computed in the same image). The flatness of the function contains the expected match accuracy and the height of the smallest minimum of the auto-SSD function beside the one in the origin gives the risk of mismatch. They are able to match  ambiguous points correctly by matching intrinsic curves \cite{Tomasi98}.
However, the proposed algorithm only accepts  matches when their quality is above a certain threshold. The obtained disparity maps are rather sparse and the accepted matches are completely concentrated on the edges of the image.
According to \cite{Sara02}, the ambiguous correspondences should be rejected. In this work a new \textit{stability property} is defined. This property is one condition a set of matches must satisfy to be considered unambiguous at a given confidence level. The stability constraint and the tuning of two parameters permits to take care of flat or periodic autocorrelation functions.  The comparison of this last algorithm with our results will be done in section \ref{sec:experimental_results}.

\subsection{\textit{A Contrario} vs Self-Similarity}

Is the self-similarity  (SS) threshold really necessary? One may wonder whether the \textit{a contrario} decision rule to accept or reject correspondences between patches would be sufficient by itself. Conversely, is the self-similarity threshold  enough to reject false matches in a correlation algorithm? This section addresses both questions and  analyzes some simple examples  enlightening the  necessity and complementarity of both tests. For each example we are going to compare the result of the {\it a contrario} test and the result of a classic correlation algorithm combined with the self-similarity threshold alone.

First consider two independent Gaussian noise images (Fig. \ref{fig:noise}). It is obvious that we would like to reject any possible match between these two images. As expected, (this is a sanity check!) the \textit{a contrario} test rejects all the possible patch matches. On the other hand, the correlation algorithm combined with the self-similarity is not sufficient:  many false matches are accepted.

\begin{figure}[h]
\begin{center}
\begin{minipage}{.3\columnwidth}
\centering
  \includegraphics[height=2.5cm]{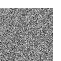}\\
	(a)\\
\end{minipage}
\hfil
\begin{minipage}{.3\columnwidth}
\centering
   \includegraphics[height=2.5cm]{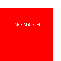}\\
	(b)\\
\end{minipage}
\hfil
\begin{minipage}{.3\columnwidth}
\centering
  \includegraphics[height=2.5cm]{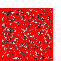}\\
	(c)\\
\end{minipage}
\end{center}
\caption{(a) Reference noise image. (b) No match at all has been accepted by the \textit{a contrario}  test! (c) Many false correspondences have been accepted by the self-similarity threshold.}
\label{fig:noise}
\end{figure}

The second comparative test is about occlusions. If a point of the scene can be observed in only one of the images of the stereo pair, then an estimation of its disparity is simply impossible. The best decision is to reject its matches. A good example to illustrate the performance of both rejection tests ACBM and SS is the map image (Middlebury stereovision database, Fig. \ref{map_ac_as}) which has a large baseline and therefore an important number of occluded pixels. ACBM gives again the best result (see Table \ref{table_map_ac_as}).  The table indicates that the self-similarity test only removes a few additional points. Yet, even if the proportion of eliminated points is tiny, such mismatches can be very annoying and the gain is not negligible at all.

\begin{figure}[h]
\begin{center}
\begin{minipage}{.45\columnwidth}
\centering
  \includegraphics[height=3cm]{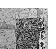}\\
	(a)\\
\includegraphics[height=3cm]{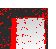}\\
	(c)\\
\end{minipage}
\hspace{.2cm}
\begin{minipage}{.45\columnwidth}
\centering
  \includegraphics[height=3cm]{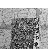}\\
	(b)\\
\includegraphics[height=3cm]{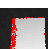}\\
	(d)\\
\end{minipage}

\end{center}
\caption{(a) Reference image (b) Secondary image. The rectangular object occludes part of the background (c) The \textit{a contrario} test does not accept any match for pixels in the occluded areas.  (d) With the self-similarity threshold the disparity map is denser, but wrong disparities remain in the occluded region.}
\label{map_ac_as}
\end{figure}

\begin{table}

 \begin{center}
  \begin{tabular}{c|c|c}
   & Bad matches & Total matches\\
\hline
SS &  3.35\%     &  85.86\%   \\
ACBM &  0.37\%     &  64.85\%    \\
ACBM+SS &0.36\%    &  64.87\%    \\
\end{tabular}

 \end{center}
\caption{Quantitative comparison of several algorithms on Middlebury's Map image: the block-matching algorithm with the self-similarity threshold (SS), the \textit{a contrario} algorithm (ACBM) and the algorithm  combining both (ACBM+SS). The percentage of matches for each algorithm is computed in the whole image and among these the number of wrong matches is also given. A match is considered wrong if its disparity difference with the ground truth disparity is larger than one pixel.}
\label{table_map_ac_as}
\end{table}

 The {\it a contrario} methodology cannot detect the ambiguity inherent in periodic patterns. Indeed, periodicity certainly does not occur ``just by chance.'' The match between a window and another identical window on a building fa\c cade is obviously non casual and is therefore legally accepted by an {\it a contrario} model.   In this situation, the self-similarity test is necessary. A synthetic case has been considered in Fig. \ref{brodatz_ratlla}, where the accepted correspondences are completely wrong in the \textit{a contrario} test for the repeated lines. On the contrary, the self-similarity threshold is able to reject matches in this region of the image.

\begin{figure}
\begin{center}
\begin{minipage}{.31\columnwidth}
\centering
  \includegraphics[height=2.4cm,width=2.4cm]{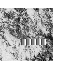}\\
	(a)\\
\end{minipage}
\begin{minipage}{.31\columnwidth}
\centering
  \includegraphics[height=2.5cm,width=2.5cm]{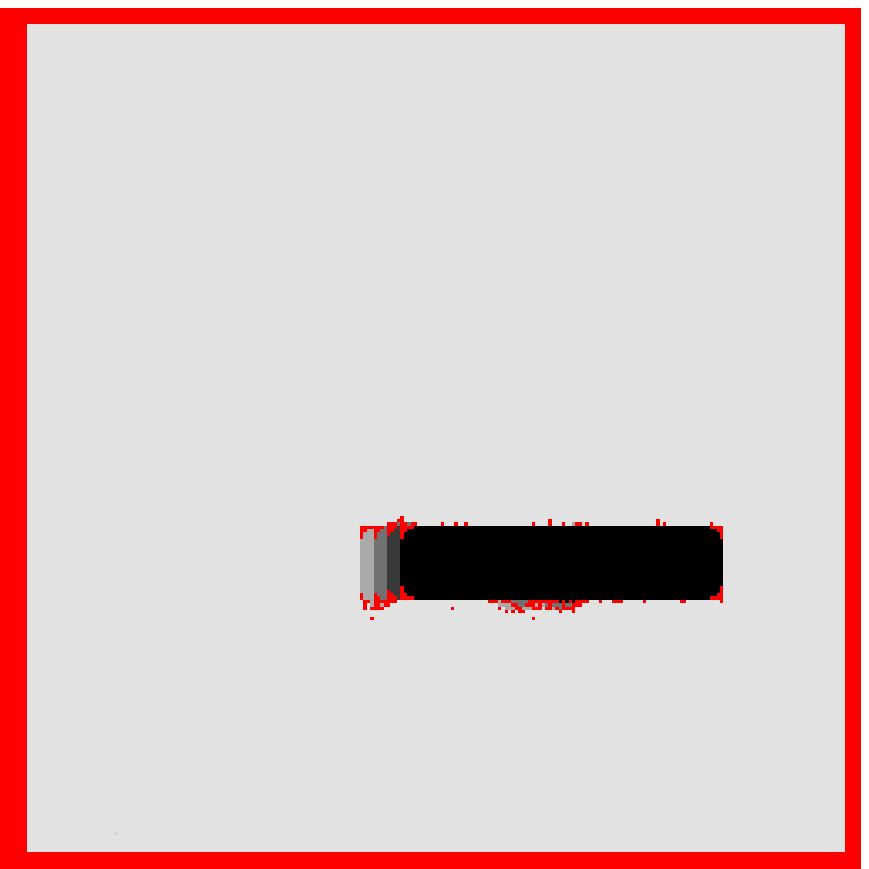}\\
	(b)\\
\end{minipage}
\begin{minipage}{.31\columnwidth}
\centering
   \includegraphics[height=2.5cm,width=2.5cm]{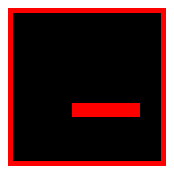}\\
	(c)\\
\end{minipage}

\end{center}
\caption{(a) Reference image with a texture and a stripes periodic motif. The secondary image is a  2 pixels translation of the reference image. The obtained disparity map should be a constant image with value 2. (b) The \textit{a contrario} test gives the right disparity 2 everywhere, except in the stripes region. (c) The repeated stripes are locally similar, so the self-similarity threshold rejects all the patches in this region. }
\label{brodatz_ratlla}
\end{figure}

In short, ACBM and SS are both necessary and complementary.   SS only removes a tiny additional number of errors, but even a few outliers can be very annoying in stereo. From now on, a possible match $(\bq, \bq')$ will therefore be accepted only if it is a meaningful match (ACBM test in Def.~\ref{def:meaningful_match}) and satisfies the SS condition given by (\ref{test_ss}).

\section{Comparative Results}

\label{sec:experimental_results}
 The algorithm parameters are identical for all experiments throughout this paper. The comparison window size is   $9\times9$, the number of considered principal components is $9$, the number of quantum probabilities  is $5$.  The previous section showed how the proposed method (ACBM + SS) deals with noise, occlusions and repeated structures. The detection method is also adapted to quasi-simultaneous stereo from aerial or satellite images, where moving  objects (cars, pedestrians) are a serious disturbance. Essentially, this is the same problem as the occlusion problem, but the occlusion is caused by camera motion in presence of a depth difference instead of object motion.
Figure~\ref{marseille} shows  a stereo pair of images of the city of Marseille (France). In both cases, several cars have changed position between the two images. They are duly detected. The shadow regions, which contain more noise than signal, have also been rejected. We have also compared our results with the Kolmogorov's graph cut implementation \cite{Kolmogorov05} which rejects {\it a posteriori} incoherent matches and are labeled as occlusions. In these examples, graph cuts is able to reject some mismatches due to the moving objects in the scene but a lot of conspicuous errors remain in the final disparity map. Likewise, OpenCV's stereo matching algorithm \cite{opencv} fails completely on this kind of pairs, even though it obtains correct results in more simple examples like the one in figure~\ref{map_ac_as}.
\\


\begin{figure}
\centering
 \begin{tabular}{ccc}
 \vcell{reference image} &
 \includegraphics[width=3cm]{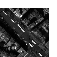} & \includegraphics[width=3cm]{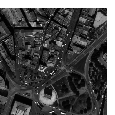} \\
 \vcell{secondary image} &
 \includegraphics[width=3cm]{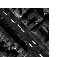} & \includegraphics[width=3cm]{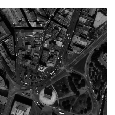}\\
 \vcell{ACBM+SS} &
 \includegraphics[width=3cm]{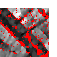} & \includegraphics[width=3cm]{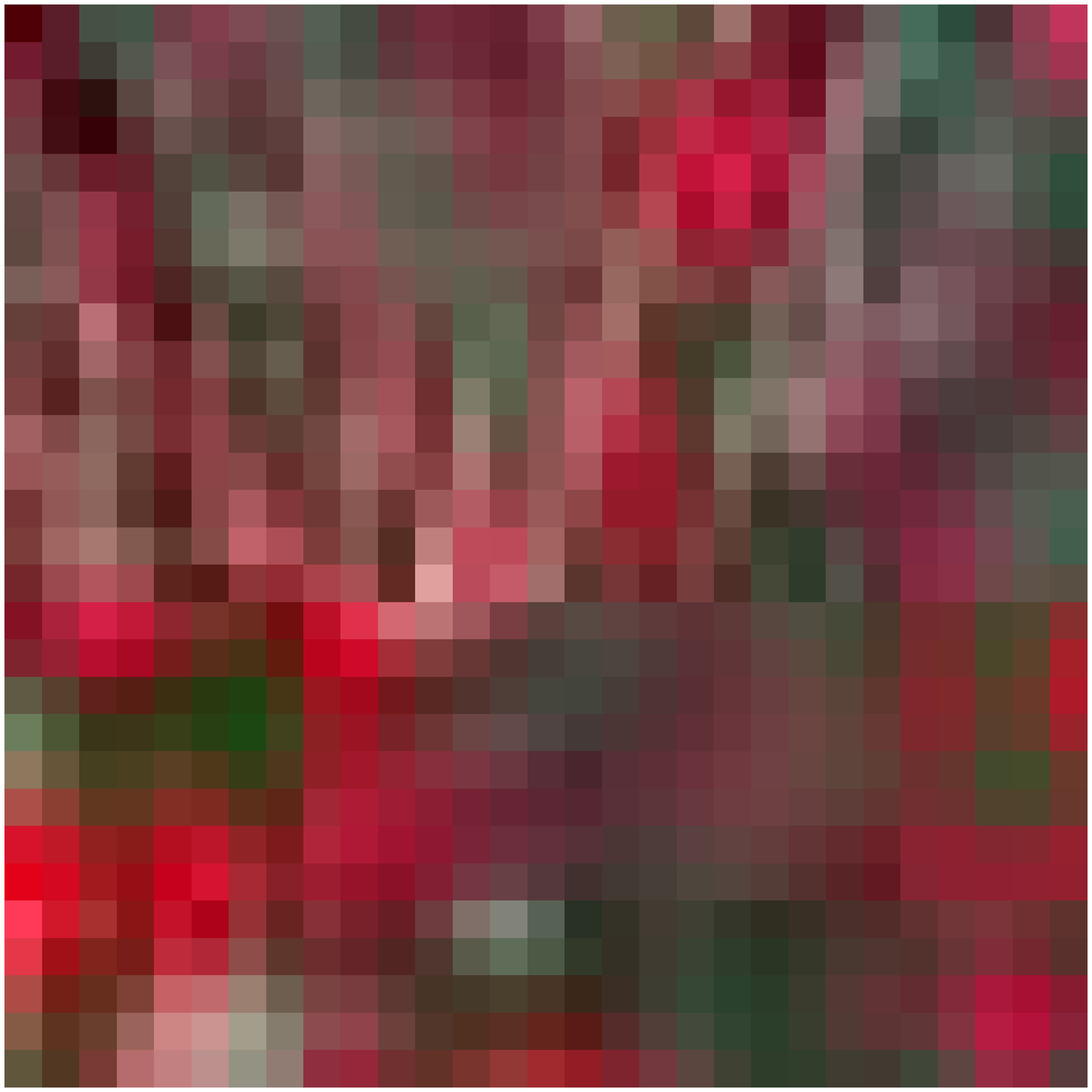}\\
 \vcell{graph-cuts} &
 \includegraphics[width=3cm]{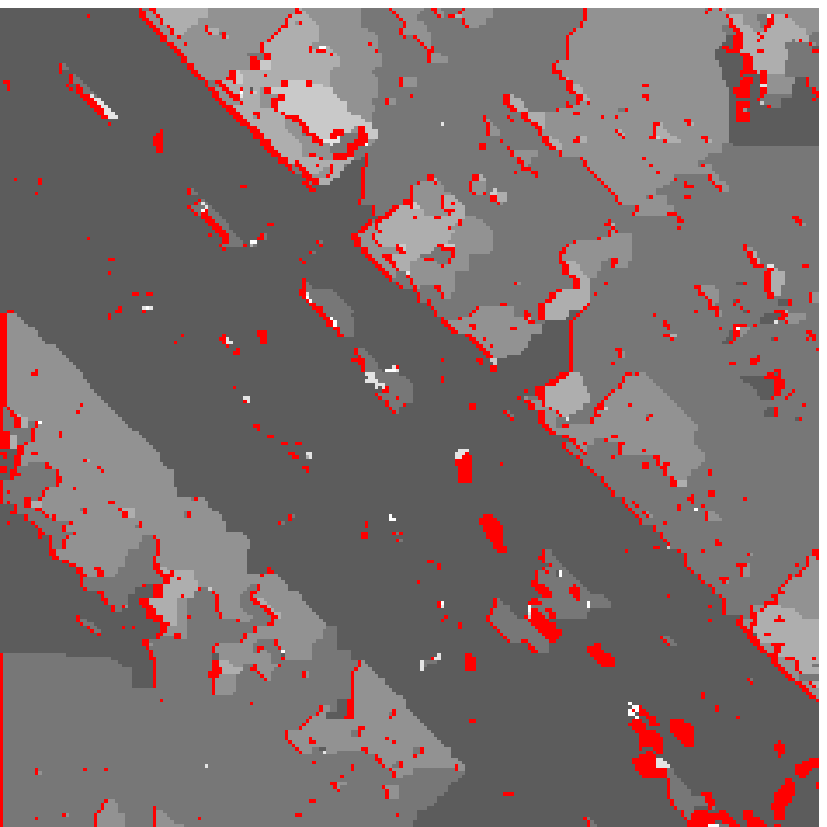} & \includegraphics[width=3cm]{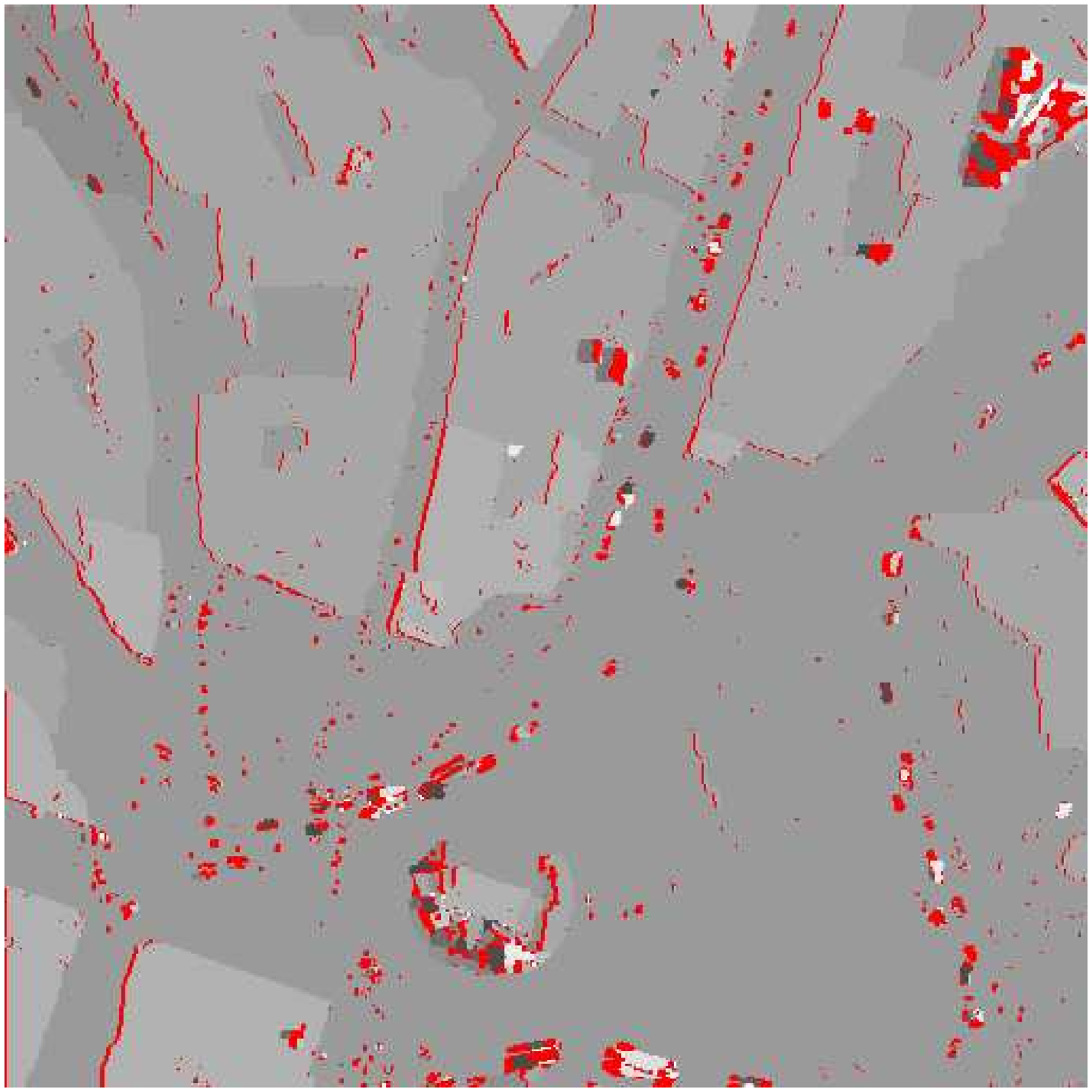}\\
 \vcell{OpenCV} &
 \includegraphics[width=3cm]{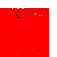} & \includegraphics[width=3cm]{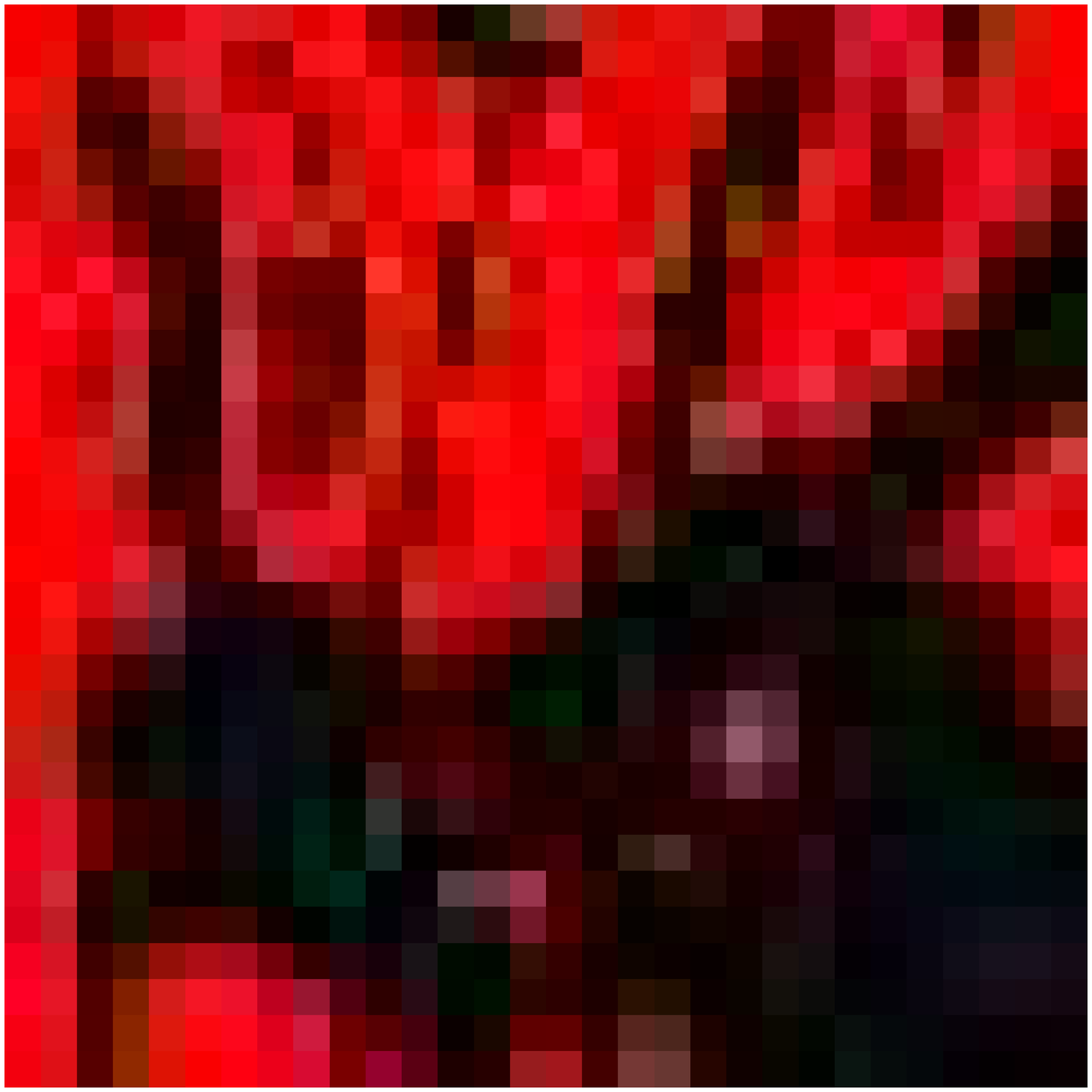}\\
 \end{tabular}
\caption{ From top to bottom:  reference image, secondary image, ACBM+SS disparity map, graph cuts disparity map, and OpenCV disparity map. In our disparity map, red points are points which haven't been matched. Notice that patches containing a moved car or bus haven't been matched. Poorly textured regions (shadows) where noise dominates have  also been rejected. Red points in the graph cuts disparity map are rejected {\it a posteriori} and considered as occlusions. The graph cuts disparity map is denser and smoother but several mismatches appear in the low textured areas and regions with moving objects.}
\label{marseille}

\end{figure}

The proposed algorithm will now be compared with the non-dense algorithms of \cite{Sara02}, \cite{Veksler02}, \cite{Veksler03} and \cite{Mordohai06}, whose aims are comparable. All of these papers have published experimental results on the first Middlebury dataset \cite{Scharstein02} (Tsukuba, Sawtooth, Venus and Map pair of images), on the non-occluded mask. These four algorithms  compute sparse disparity maps and propose techniques rejecting unreliable pixels.
We also show some additional comparison with the block matching method implemented in the OpenCV library version 2.2.0 \cite{opencv}, because it is possibly the most widely used one since it comes close to real-time performance.

The authors of \cite{Mordohai06} compute an initial classic correlation disparity map and select correct matches based on the support these pixels receive from their neighboring candidate matches in 3D after tensor voting. 3D points are grouped into smooth surfaces using color and geometric information and  the points which are inconsistent with the surface color distribution are removed. The rejection of wrong pixels is not complete, because the algorithm fails when some objects appear only in one image, or when occluded surfaces change orientation. A variation of the critical rejection parameters can lead to quite different results.

\cite{Veksler02} detects and matches so called ``dense features'' which consist of a connected set of pixels in the left image and a corresponding set of pixels in the right image such that the intensity edges on the boundary of these sets are stronger than their matching error on  the boundary (which is the absolute intensity difference between corresponding boundary pixels). They call this the ``boundary condition.'' The idea is that even the boundary of a non textured region can give a correspondence. Then, each dense feature is associated with a disparity. The main limitation is the way dense features are extracted. They are extracted using a local algorithm which processes each scan line independently from the others. As a result, top and bottom boundaries are lost. On the contrary, \cite{Veksler03} uses graph cuts to extract ``dense features'' (which of course does not necessarily imply a dense disparity map) thus enforcing the boundary conditions. The results in \cite{Veksler02} are rather dense and the error rate is one of the most competitive ones. Yet these good results are also due to the particularly well adapted structure of the benchmark.  Indeed, the Sawtooth, Venus and Map scenes consist of piecewise planar surfaces, with almost fronto-parallel surface patches. The ground truth of Tsukuba is a piecewise constant disparity map with six different disparities.

Table \ref{table_compar_sparse} summarizes the percentage of matched pixels (density) and the percentage of mismatches (where the estimated disparity differs by more than one pixel from the ground truth). This table  reports first the result of  ACBM+SS, whose error rate is very small and yields larger match densities than Sara's results \cite{Sara02}.
To compare with other algorithms yielding denser disparity maps, the results of ACBM+SS have been densified  by the most straightforward proximal interpolation (a 3$\times$3 spatial median filter). Doing this, the match density rises significantly while keeping small error rates. Still,  large regions containing poor textures, typically shadows in aerial imaging, are impossible to fill in because they contain no information at all.  Besides the compared algorithms in Table \ref{table_compar_sparse} \cite{Szeliski01} also published non-dense results for the Tsukuba image (error rate of $2.1\%$ with a density of $45\%$) but since non-dense results on other images are not published it does not appear in our table.

Fig. \ref{fig:shrub} compares the ACBM+SS results with opencv, graph cuts and the Sara published results on the classic CMU Shrub pair\footnote{http://vasc.ri.cmu.edu/idb/html/jisct}. Sara's disparity map has several mismatches and the ACBM+SS results are obviously denser. On the other hand, Kolmogorov's graph cut implementation is denser but the mismatches have risen considerably. OpenCVs disparity map is more dense than Kolmogorov's, and less dense than Sara's, but it has also the highest number of wrong matches. So, the proposed algorithm ACBM+SS has a better trade-off between density and mismatches.  In the Kolmogorov graph cuts implementation the occlusions are detected, providing a non-dense disparity map. It is clear that detecting occlusions in real images is not enough to avoid mismatches. Another example is shown in Fig. \ref{fig:flower_garden}, where the almost dense disparity map obtained with graph cuts is compared with the ACBM+SS disparity map. The top left of the image gets by Graph Cuts a completely wrong disparity: the sky and the tree branches are clearly not at the same depth in the scene. This type of error is unavoidable with global methods. The depth of the smooth sky is inherently ambiguous. By the minimization process it inherits the depth of the twigs through which it is seen.

An interesting question arises out of the comparative results about the duality error/density. We have seen that our algorithm gives very low error percentages with densities between 40\% and 90\%. The parameter $\varepsilon$ can be increased but then the error rate will rise. Our goal is to match with high reliably the points between two images and reject any possible false match. So the choice of one expected false alarm ($\varepsilon=1$) is a conservative choice but ensures a very small error percentage.

{\it Discussion on the other parameters: }
We have mentioned that the number of considered principal components $N$ and the number of quantum probabilities $Q$ can be increased without noticeable alteration of the results. In practice, the two values are chosen (for computational reasons) to the minimal values not affecting the quality of the result.  They are fixed once and for all to $N=9$ and $Q=5$ respectively. Another parameter is the search region size ($2R+1$) but it is easy to find since we only need $R$ to be larger than the largest disparity in the image, which is a classic assumption in stereovision algorithms (in practice $R$ can be estimated from the sparse matching of interest points that was previously obtained for the epipolar rectification step). Finally, the last parameter is the size of the block. We know that very small blocks are affected by image noise but at the same time, the bigger the block, the bigger the fattening error (also named adhesion error). This error becomes apparent at the object borders of the scene causing a dilation of their real size, which is proportional to the block-size. The fattening phenomenon is not the object of the this paper but different solutions have already been suggested to avoid it \cite{Delon07}. Fixing the size of the block to $9 \times 9$ seems to be a good compromise between noise and fattening for a realistic SNR conditions, ranging from 200 to 20 (the SNR is measured as the ratio between the average grey level and the noise standard deviation.)

{\it Computational time: }
For the sake of computational speed, the PCA basis is previously learnt on a set of representative images and stored once and for all.\footnote{
In our experience the (computationally intensive) choice of this basis does not significantly affect the results, but the (computationally fast) learning of marginal distributions for a particular image on this basis  does.}
Then, this basis is used to compute all image coefficients. Notice that only the image coefficients of the second image need to be sorted in order to compute the resemblance probability between all possible matches. With our implementation, which is still not highly optimized for speed, an experiment with a pair of images of size $512 \times 512$ with disparity rang = [-5,5], takes 4.5 seconds running on a 2.4 GHz Intel Core 2 Duo processor. \\
A similar experiment with the OpenCV stereo algorithm takes between 5 and 500 miliseconds. This is much closer to real-time requirements, but results are also much more data-dependent, producing good results in easy examples like the Middlebury pair, but much less dense and less reliable results than our method in more difficult scenes like shrub, marseille or even the stereo pairs provided with OpenCV.


\begin{table*}
\begin{center}
\begin{tabular}{|c||c|c|c|c|c|c|c|c|}

\hline
        & \multicolumn{2}{c|}{Tsukuba} & \multicolumn{2}{c|}{Sawtooth} & \multicolumn{2}{c|}{Venus} &  \multicolumn{2}{c|}{Map} \\
\cline{2-9}
	& Error(\%) & Density(\%)  &  Error(\%) & Density(\%) &  Error(\%) & Density(\%) &  Error(\%) & Density(\%)\\
\hline
ACBM + SS   & \textbf{0.31}   &  45.6  & \textbf{0.09}  & 65.7 & 0.02 &  54.1   &  \textbf{0.0} &  84.8   \\
\hline
ACBM + SS + Median filter   & 0.33   &  54.3  & 0.14  & 77.9 &  \textbf{0.0} &  66.6   & \textbf{0.0}  &  93.0   \\
\hline
Sara   \cite{Sara02}       & 1.4   &  45  & 1.6 & 52  &  0.8 & 40   &  0.3 &  74   \\
\hline
Veksler 02  \cite{Veksler02}    & 0.38   &  66  & 1.62  & 76 & 1.83  &  68  & 0.22  &  87   \\
\hline
Veksler 03   \cite{Veksler02}   & 0.36   &  75  & 0.54  & 87 &  0.16 &  73  & 0.01  &  87   \\
\hline
Mordohai and Medioni \cite{Mordohai06} & 1.18   &  74.5  & 0.27  & 78.4 &  0.20 & 74.1   &  0.08 & 94.2    \\
\hline

\end{tabular}
\end{center}
\caption{Quantitative results on the first Middlebury benchmark data set. The error statistics are computed on the mask of non occluded pixels. Any error larger than $1$ pixel is considered a mismatch. ACBM+SS obtains less mismatches in all four images.}
\label{table_compar_sparse}
\end{table*}

\begin{figure}[h]
\begin{center}
\begin{minipage}{.45\columnwidth}
\centering
 \includegraphics[height=3cm]{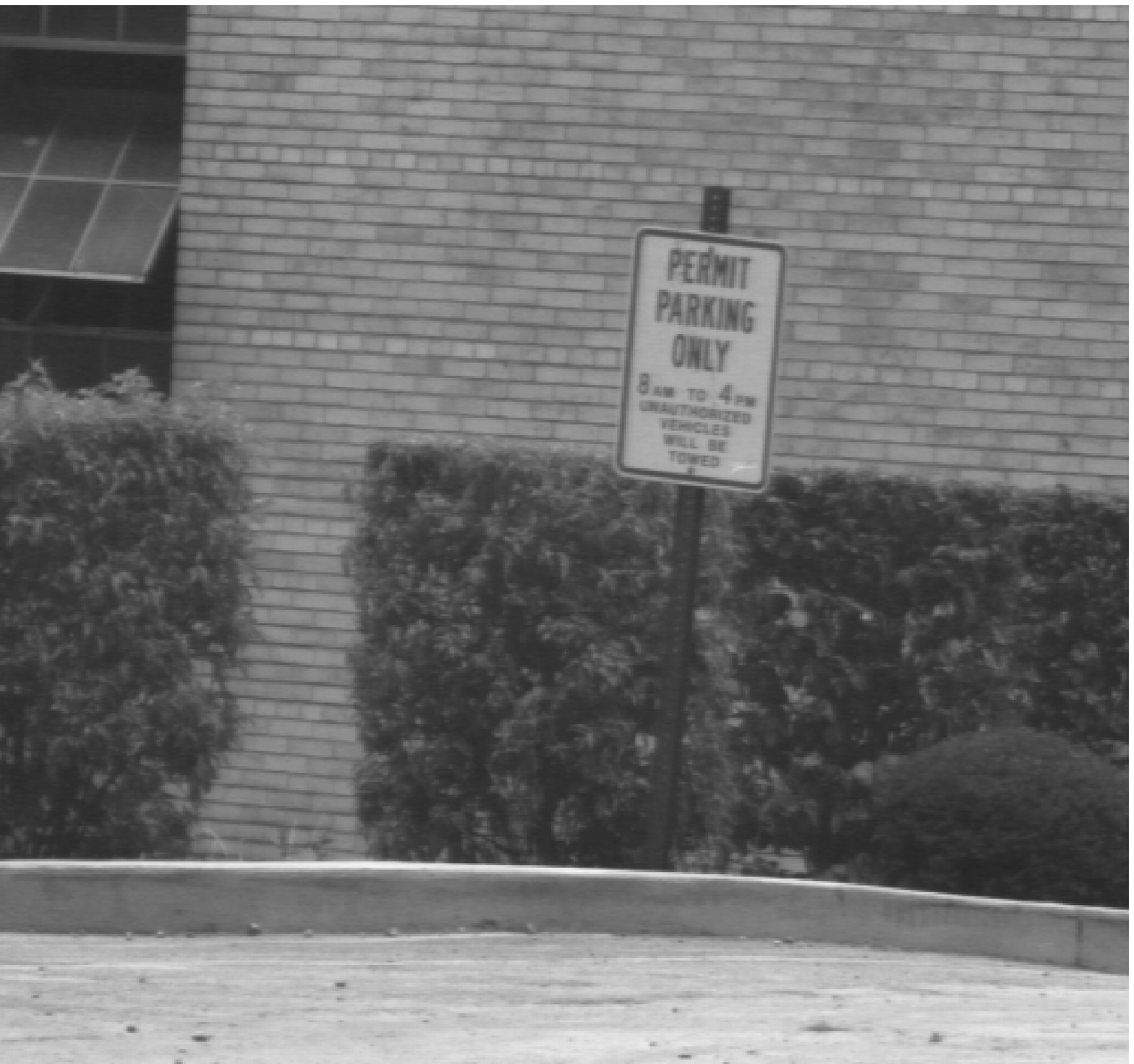}\\
	(a) left image\\ \vspace{0.1cm}
\includegraphics[height=3cm]{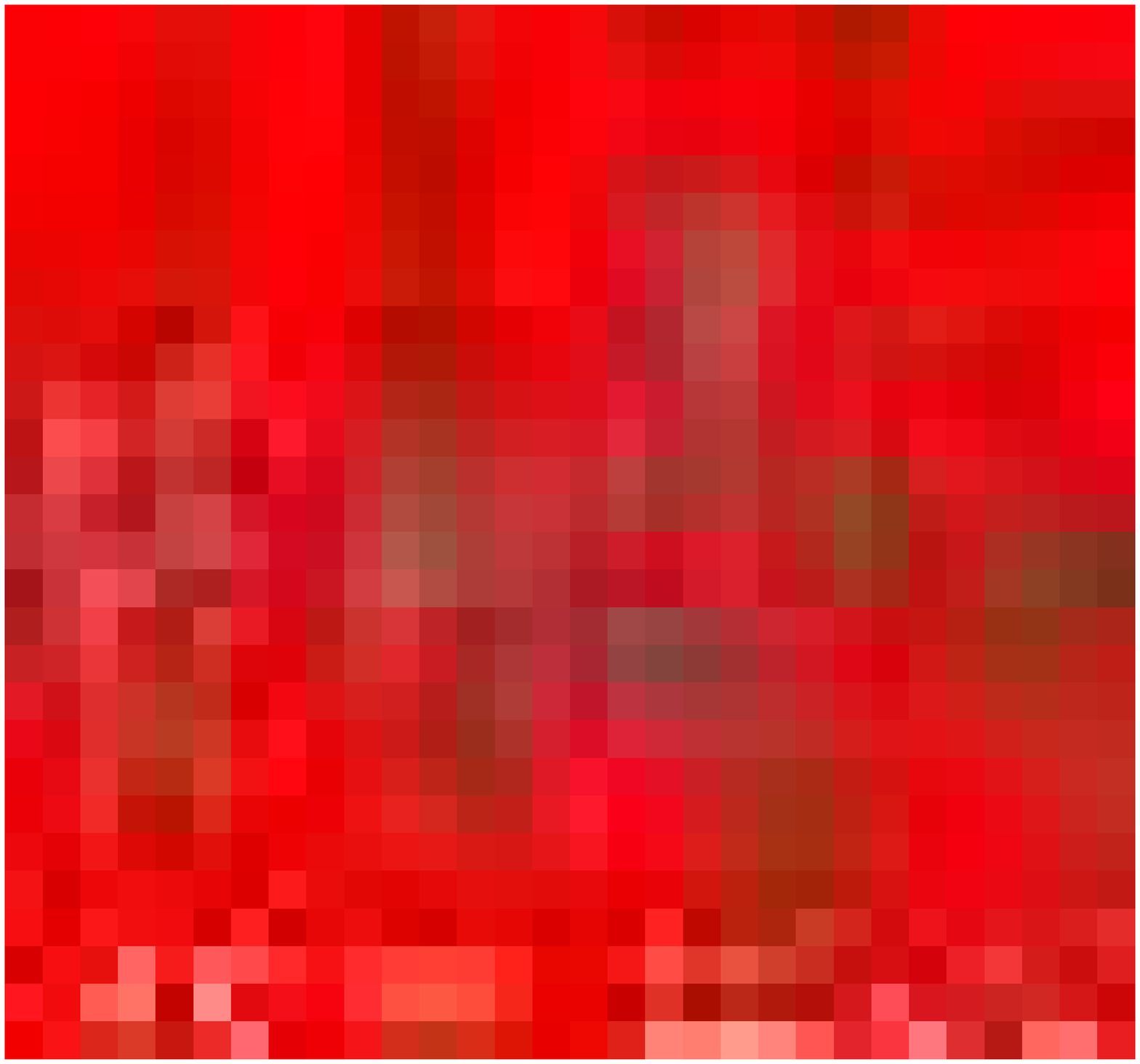}\\
	(c) Sara \cite{Sara02}\\ \vspace{0.1cm}
\includegraphics[height=3cm]{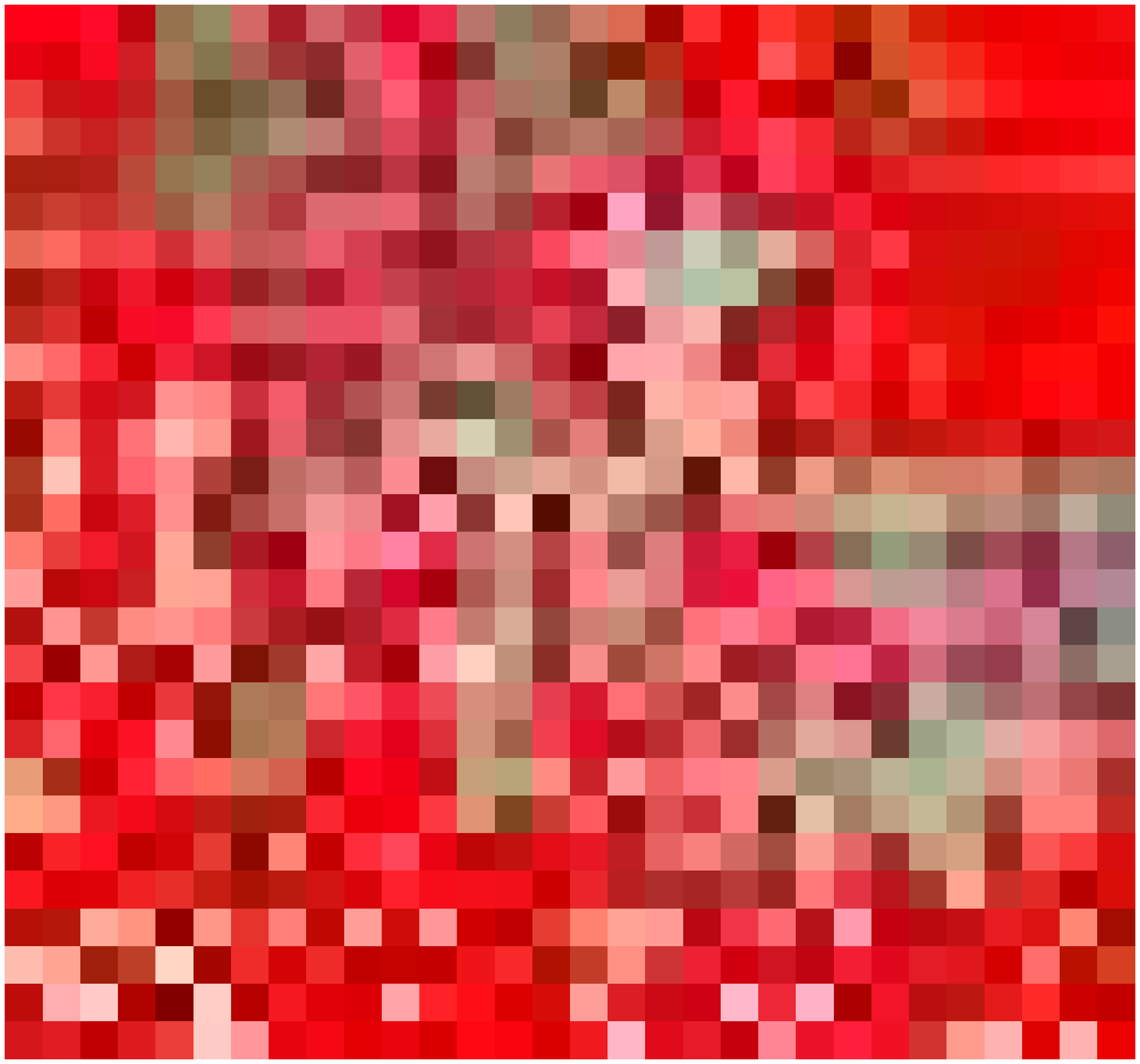}\\
(e) Proposed algorithm\\
\end{minipage}
\hfil
\begin{minipage}{.45\columnwidth}
\centering
\includegraphics[height=3cm]{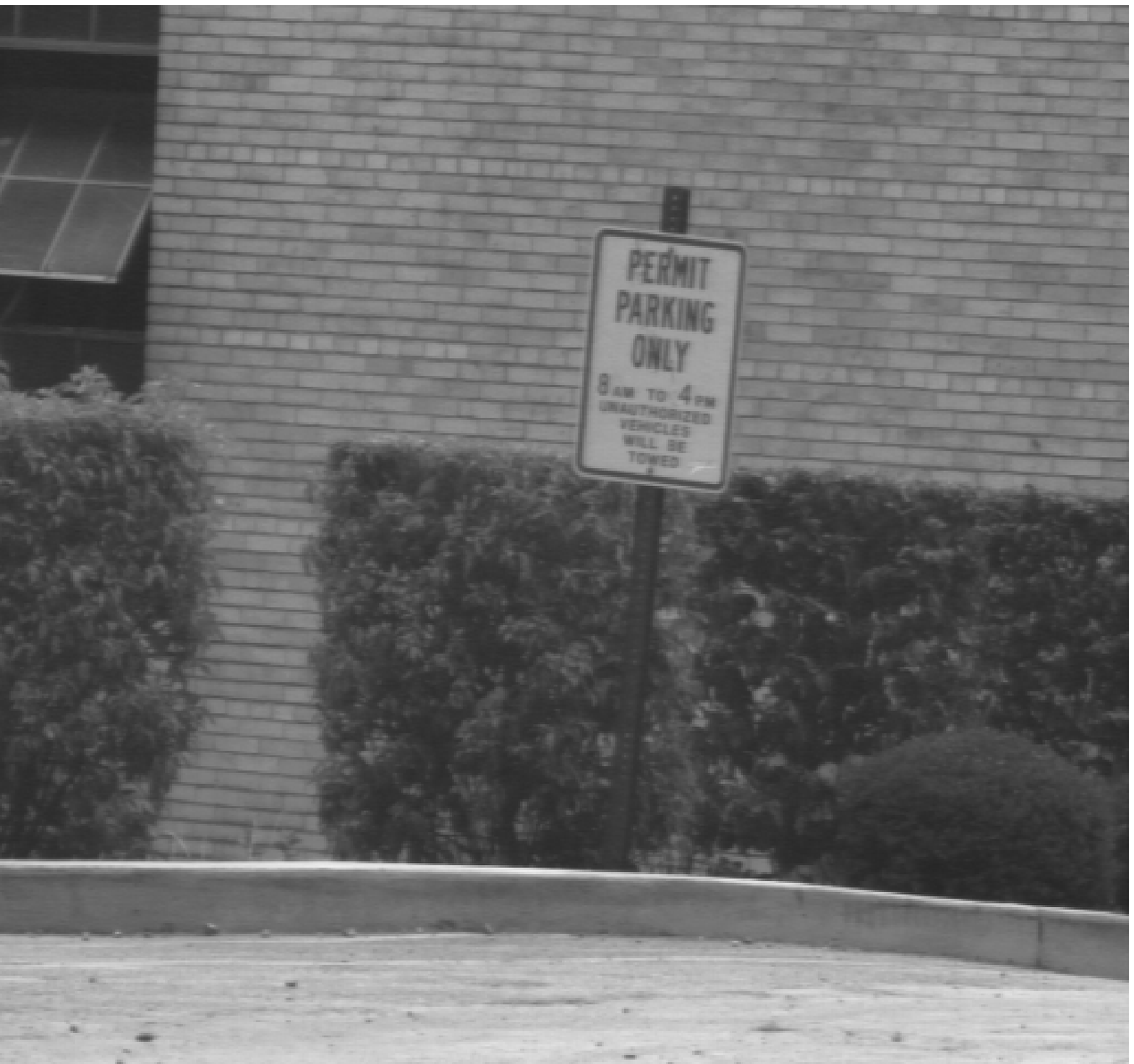}\\
	(b) right image\\ \vspace{0.1cm}
\includegraphics[height=3cm]{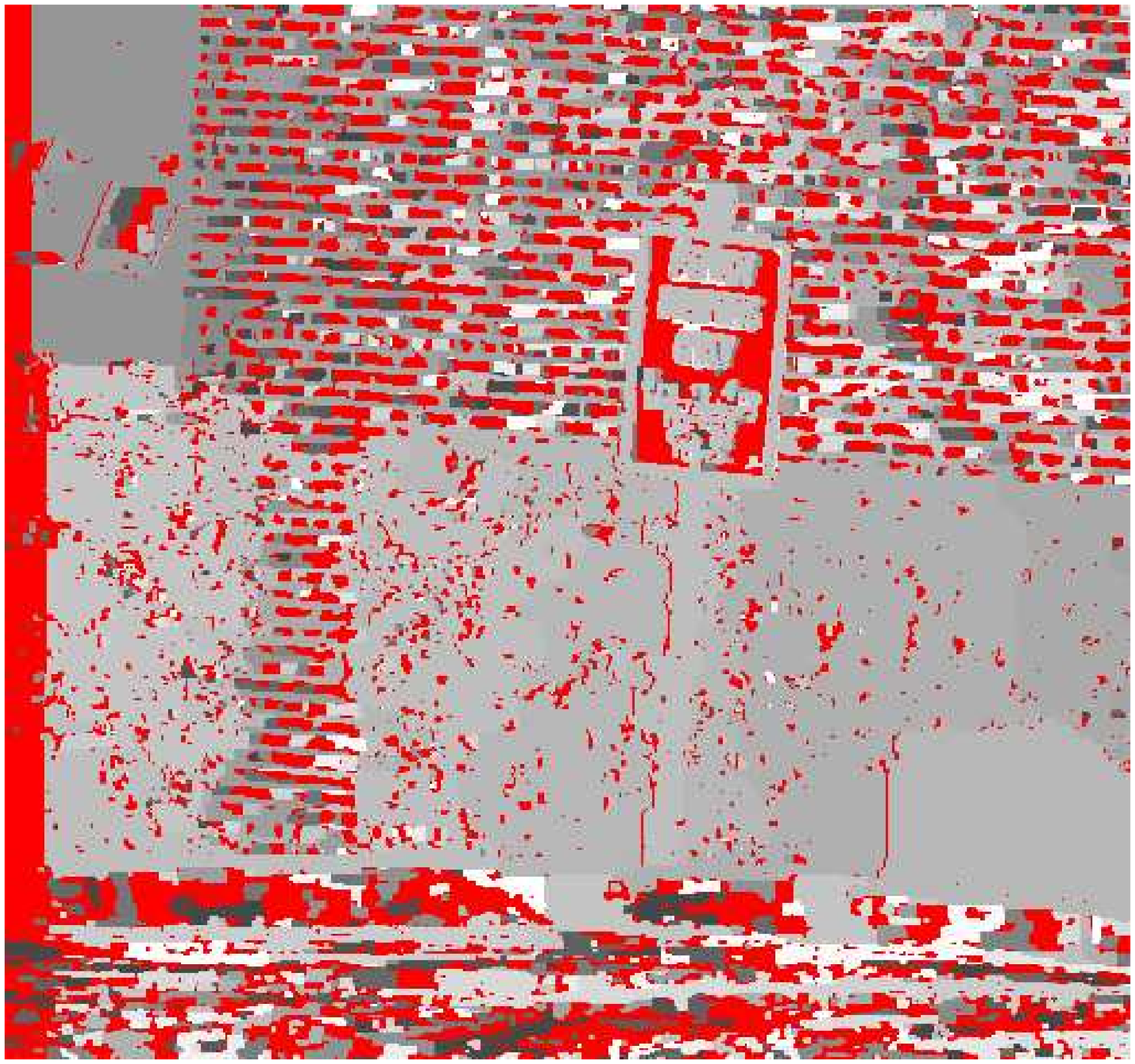}\\ 
	(d) Kolmogorov \cite{Kolmogorov05} \\ \vspace{0.1cm}
\includegraphics[height=3cm]{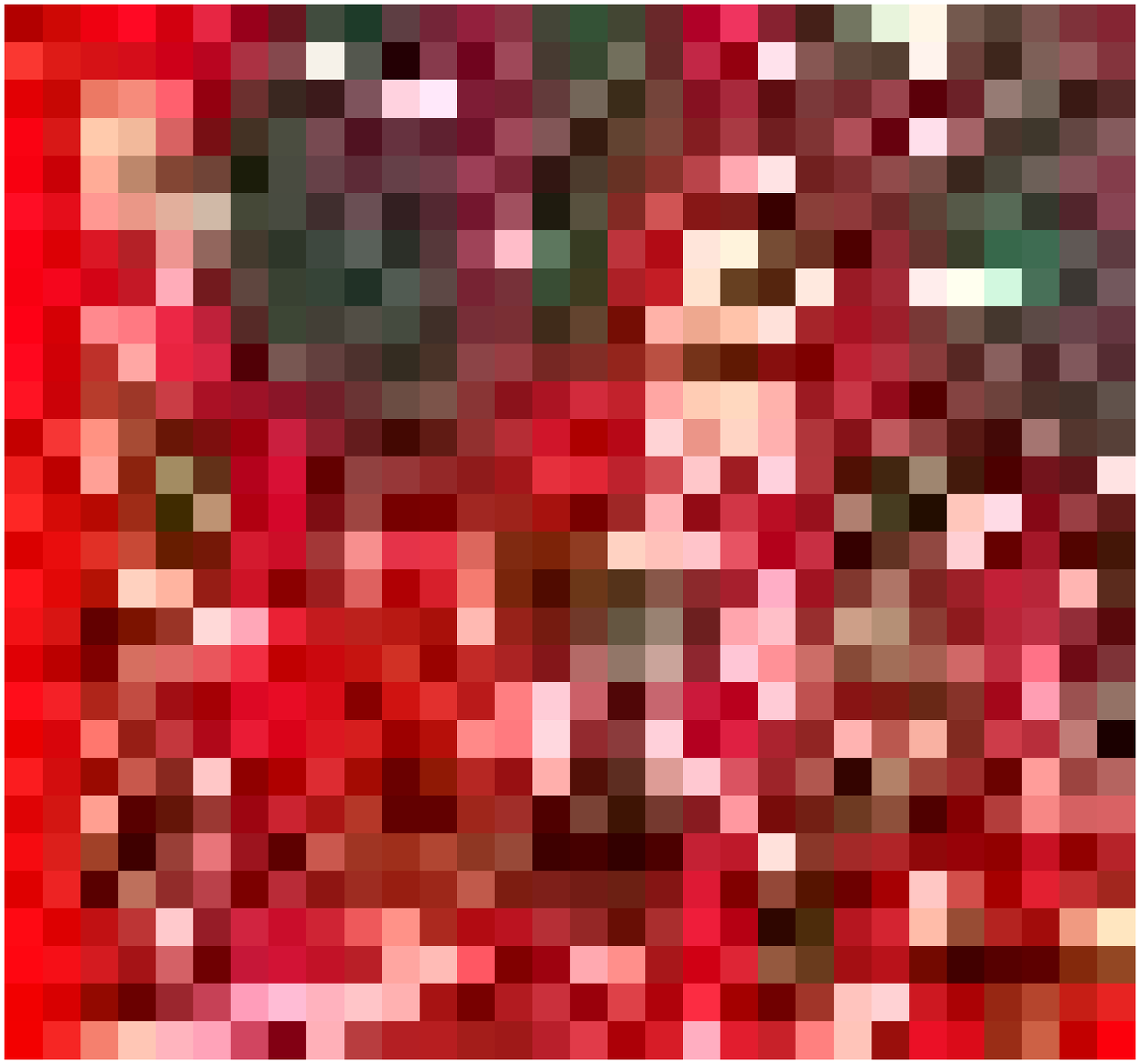}\\
   (f) OpenCV SGBM
\end{minipage}

\end{center}
\caption{CMU Shrub scene. (a) and (b) Reference and secondary images. (c) Method of Sara \cite{Sara02}. Red points are rejected. Density: 24\% (d) Kolmogorov's Graph-Cuts \cite{Kolmogorov05}. Red points are points detected as occlusions. Density: 77\%  (e) ACBM+SS. Red points are rejected points. Density: 42\%. Sara's disparity map has a lower density and has several evident mismatches. Kolmogorov's disparity map is denser but has many obvious errors. (f) The block matching algorithm included in OpenCV is also not very dense AND contains many errors. It is only provided as a reference of what can be easily obtained with a freely available quasi-real-time block matching algorithm. (e) Proposed method ACBM+SS.}
\label{fig:shrub}
\end{figure}

\begin{figure}[h]
\begin{center}
\begin{minipage}{.45\columnwidth}
\centering
 \includegraphics[height=2.5cm]{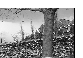}\\
	(a)\\ \vspace{0.1cm}
\includegraphics[height=2.5cm]{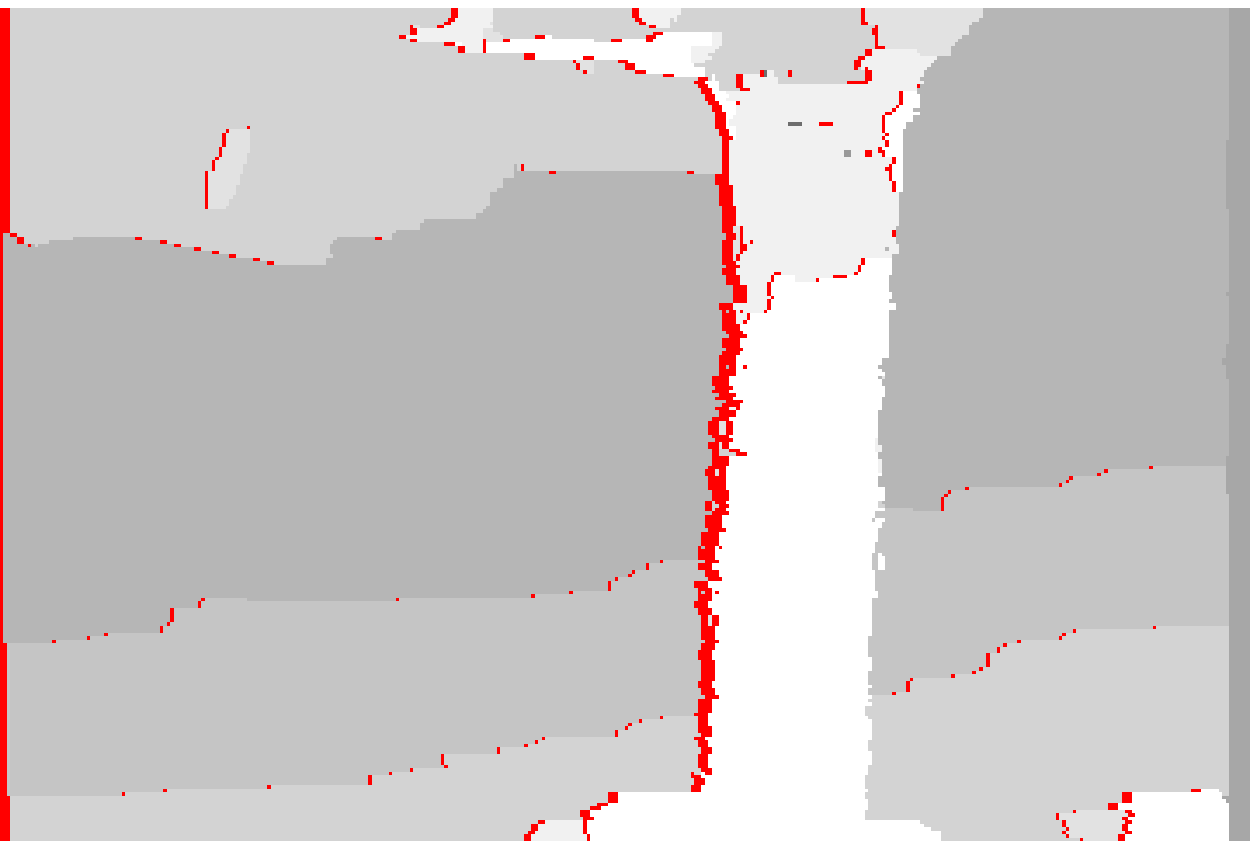}\\
	(c)\\
\end{minipage}
\hfil
\begin{minipage}{.45\columnwidth}
\centering
\includegraphics[height=2.5cm]{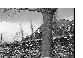}\\
	(b)\\ \vspace{0.2cm}
\includegraphics[height=2.45cm]{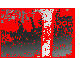}\\
	(d)\\
\end{minipage}
\end{center}
\caption{Flower-garden scene. (a) and (b) Reference and secondary images. (c) Graph Cuts (method of \cite{Kolmogorov05}). Red points are occluded points. (d) ACBM+SS. Red points are rejected points. Density: 59\%.  Most rejected points are obviously mismatched by the graph cut algorithm, which equates the depths of trees, sky and house.}
\label{fig:flower_garden}
\end{figure}

\section{Conclusion}\label{Conclusions}
Wrong match thresholds were, in our opinion, the principal drawbacks for block-matching algorithms in stereovision.
The {\it a contrario} block-matching threshold, that  was the principal object of the present paper, combined with the self-similarity threshold is able to detect mismatches systematically, by an algorithm which is essentially parameter-free. Indeed, the only user parameter is the expected number of false matches, which can be fixed once and for all in most applications. The method indiscriminately  detects occlusions, moving objects and poor or periodic textured regions.

Mismatches in block-matching have led to the overall dominance of global energy methods. However, global methods have no validation procedure, and the proposed {\it a contrario} method must be viewed as a validation procedure, no matter what the stereo matching process was. Block-matching, together with the reliability thresholds established in this paper,  gives  a fairly dense set of reliable matches (from 50$\%$ to 80$\%$ usually). It may be objected that the obtained disparity map is not  dense.

This objection is not crucial for two reasons.
First, having only validated matches opens the path to benchmarks based on accuracy, and to raise challenges about which precision can be ultimately attained (on {\it validated} matches only). Second, knowing which matches are reliable allows one to complete a given disparity map by fusing several stereo pairs. Since disposing of multiple observations of the same scene by several cameras and/or at several different times is by now a common setting, it becomes more and more important to be able to fuse 3D information obtained from many stereo pairs. Having almost only reliable matches  in each pair promises an easy fusion. A straightforward solution in our case would be the following: Given $m>2$ images, the disparity map between each possible pair of images is computed with ACBM+SS. Then the final disparity map is the accumulated disparity map considering all meaningful matches computed with all the image pairs whenever all the computed disparities for the same pixel are coherent.

\section{Acknowledgements}
The authors thank Pascal Getreuer for helpful comments on this work.
Work partially supported by the following projects FREEDOM (ANR07-JCJC-0048-01), Callisto (ANR-09-CORD-003), ECOS Sud U06E01 and STIC Amsud (11STIC-01 - MMVPSCV).

\bibliographystyle{plain}
\bibliography{biblio_article}

\end{document}